\documentclass[10pt,twocolumn,letterpaper]{article}

\PassOptionsToPackage{table}{xcolor}

\usepackage[pagenumbers]{cvpr} 
\usepackage{fontawesome5}
\usepackage{tabularx}
\usepackage{array}
\usepackage[table]{xcolor}
\usepackage[T1]{fontenc}
\usepackage[utf8]{inputenc}
\usepackage{microtype}
\usepackage{mathtools}
\usepackage{amsthm}
\newtheorem{proposition}{Proposition}
\usepackage{multirow}
\usepackage{tabularx}
\usepackage{algorithm}
\usepackage{algpseudocode}

\definecolor{ovcblue}{HTML}{285F8F}
\newcommand{\ovcdelta}[1]{\textcolor{ovcblue}{\ensuremath{#1}}}

\definecolor{cvprblue}{rgb}{0.21,0.49,0.74}
\usepackage[pagebackref,breaklinks,colorlinks,allcolors=cvprblue]{hyperref}

\def\confName{CVPR}
\def\confYear{2027}

\title{From Visual Affinity to Learning Credit in Multimodal Reinforcement Learning}

\author{
Yuhang Han\textsuperscript{1,2,3,*},
Dingning Liu\textsuperscript{5,*},
Yujie Chen\textsuperscript{1,2},
Yaojie Zhang\textsuperscript{5},
Jingwei Song\textsuperscript{2}\\
Zequn Yang\textsuperscript{2,3},
Xuanming Shang\textsuperscript{2,3},
Li Ding\textsuperscript{2,3},
Zhijiang Guo\textsuperscript{1,4,\ensuremath{\dagger}},
Linfeng Zhang\textsuperscript{2,\ensuremath{\dagger}}\\[5pt]
\textsuperscript{1}The Hong Kong University of
Science and Technology (Guangzhou)\\
\textsuperscript{2}Shanghai Jiao Tong University\\
\textsuperscript{3}Shadoweave\\
\textsuperscript{4}The Hong Kong University of
Science and Technology\\
\textsuperscript{5}University of Electronic
Science and Technology of China\\[4pt]
{\small
\textsuperscript{*}Equal contribution.
\quad
\textsuperscript{\ensuremath{\dagger}}Corresponding authors.
}\\[6pt]
{\small
\faGlobe\enspace
\url{https://kawhiiiileo.github.io/OVC_PAGE/}
}\\[2pt]
{\small
\faGithub\enspace
\url{https://github.com/kawhiiiileo/OVC}
}
}

\begin{document}
\maketitle
\begin{abstract}
Reinforcement learning with verifiable rewards (RLVR) improves
multimodal reasoning, but uniformly assigning a single response-level
advantage to all tokens overlooks the distinct roles of visual
observations and subsequent reasoning.
Visual affinity alone is insufficient: predictive states may align
with both image and prompt-context representations, and upweighting
valid visual observations can amplify their negative-advantage
updates when later reasoning fails.
To address this issue, we propose \textbf{Outcome-Conditioned Visual Credit (OVC)},
a plug-in framework that separates visual-association measurement
from token-level credit assignment and is adaptable to
policy-optimization methods with uniform response-level advantage
allocation.
Relative Visual Encoding constructs a context-relative visual
coordinate by contrasting each predictive state's affinity to
image and prompt-context representations.
Outcome-conditioned Evidence Calibration combines this coordinate
with uncertainty-scaled predictive features and calibrates their
contributions to outcome-discriminative evidence through mean
contrasts between positive- and negative-advantage responses,
scaled by regularized within-side variation.
Visual association thus informs credit through observed outcome
differences rather than directly determining update priority.
Normalized Credit Allocation conditions the calibrated evidence
on the response's advantage sign to produce positive token weights
with unit global mean.
These detached weights preserve advantage signs while modulating
individual token-gradient contributions.
OVC reuses existing rollouts without auxiliary reward models,
process annotations, or inference-time modules.
Instantiated through GSPO-token, OVC retains sequence-level clipping
and consistently improves six-benchmark mean accuracy over GSPO
across three models, with gains of 1.86--2.39 percentage points.
\end{abstract}

    
\section{Introduction}
\label{sec:introduction}

Reinforcement learning with verifiable rewards (RLVR)~\citep{DAPO,GRPO,GSPO,AT-RL} 
has become a key approach to improving reasoning in large
vision language models (LVLMs)~\citep{qwen3,qwen35,innovator-vl}. 
It provides verifiable outcome supervision 
without dense intermediate-step annotations.
Recent efforts have advanced multimodal reasoning 
via improved training data~\citep{data1,data2,data3,data4,data5,Mm-eureka,vision-r1}, 
policy optimization~\citep{reward1,reward2,reward3,reward4,reward5,reward6,reward7}, 
and perceptual supervision~\citep{PAPO,VisualPRM}. 
Yet outcome feedback remains coarser than the computations it guides. 
A single response may involve visual observations,
symbolic transformations, and intermediate inferences, 
while its reward evaluates only the complete solution. 
This motivates finer-grained token credit assignment using visual signals.

Existing approaches~\citep{VPPO,AT-RL,KAWHI,VGPO} derive
token-level signals from visual dependence, representational similarity,
predictive uncertainty, or region alignment.
Although these signals capture different aspects of visual involvement,
their conversion into update coefficients still requires deciding how
the measured association should affect learning.
With DelTA's predictive-state representation~\citep{DelTA},
a state may align with both image and prompt representations, so
absolute image affinity alone does not distinguish these associations.
Moreover, a valid visual observation may occur in a
negative-advantage response when later reasoning fails.
Upweighting it would strengthen its negative update even if the error
occurs elsewhere in the response.
The central issue is therefore
\textbf{not merely how to measure visual association,
but how to interpret it under outcome feedback.}
\begin{figure}[t]
  \centering
  \includegraphics[
      width=\linewidth,
      trim=0 15pt 0 9pt,
      clip
    ]{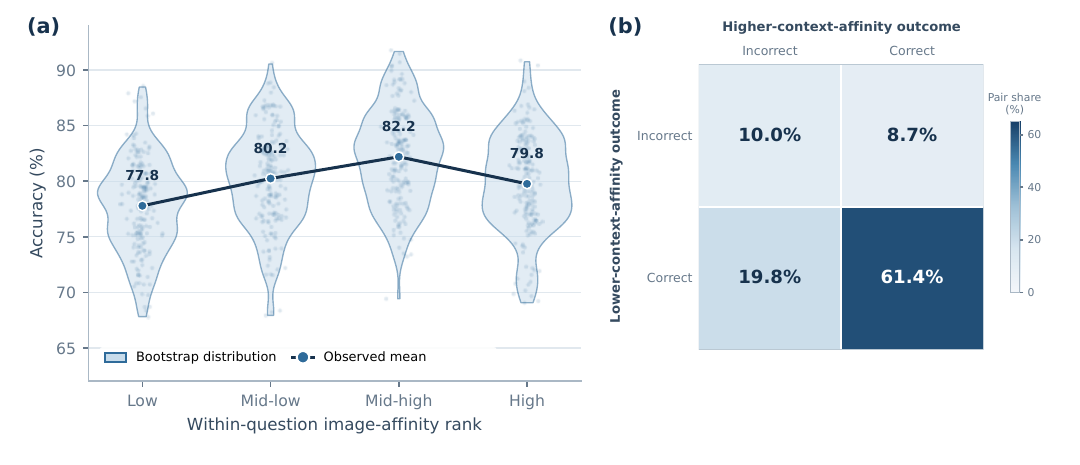}
\caption{
\textbf{Affinity and response correctness.}
We analyze 8 valid responses per randomly sampled question
from a frozen Qwen3.5-9B trained with GSPO on Math77K,
without correctness filtering.
Affinity is
$s_i^u=|\mathcal R_i|^{-1}\sum_{t\in\mathcal R_i}
\cos(h_{it},g_i^u)$,
where $h_{it}$ predicts token $t$, $g_i^u$ is the mean-pooled
image ($u=\mathrm{img}$) or non-image prompt ($u=\mathrm{ctx}$)
prototype from the same intermediate layer, and $\mathcal R_i$
indexes reasoning tokens excluding special tokens and the final answer.
No coordinate rescaling or image--context subtraction is applied.
\textbf{(a)} Mean accuracy over four within-question
image-affinity rank groups (two responses each).
Each faint point is a group mean from a question-level bootstrap
resample, with shared indices across groups.
Violins show the distributions; dark markers are
original-sample means.
\textbf{(b)} Joint correctness of within-question pairs
roughly matched on image affinity.
Rows and columns represent lower- and higher-context-affinity
response outcomes, respectively; cells report percentages
of all matched pairs. 
}
  \label{fig:motivation}
\end{figure}

Figure~\ref{fig:motivation} illustrates this distinction.
In (a), the highest image-affinity group does not attain the
highest observed mean accuracy, suggesting a possible mismatch
between affinity ranking and response quality.
In (b), responses approximately matched on image affinity
exhibit different outcome statistics across context-affinity levels.
These observations motivate considering contextual association
alongside image affinity, rather than treating the latter alone
as sufficient for credit allocation.
Together with the advantage-sign dependence discussed above,
they motivate a two-stage principle: measuring visual relevance
relative to context and calibrating it against outcome differences.

To instantiate this principle, we propose
\textbf{Outcome-Conditioned Visual Credit (OVC)}.
Its first component, \textbf{Relative Visual Encoding (RVE)},
contrasts a predictive state's cosine affinity to image and
prompt-background representations under a shared diagonal
scaling. The resulting coordinate captures relative visual
association without directly prescribing an update magnitude.
\textbf{Outcome-conditioned Evidence Calibration (OEC)} then
combines this coordinate with predictive features and determines
their evidential contributions from contrasts between positive-
and negative-advantage responses. A regularized estimate of
within-side diagonal variation calibrates the relative
contribution of each coordinate. 
Consequently, with the statistics held fixed, 
the visual contribution depends on the outcome contrast 
and the variance scale, not on affinity alone, 
and equals zero when the visual reference means coincide.

This calibrated evidence is converted into policy updates
through \textbf{Normalized Credit Allocation (NCA)}.
NCA produces positive, globally normalized token coefficients
that modulate the original sequence advantage while preserving
its sign. Applied through GSPO-token~\citep{GSPO}, these detached coefficients
retain sequence-level importance ratios and clipping while
adjusting the contribution of individual token-gradient terms.
OVC introduces no auxiliary reward model, additional process
or region supervision, or inference-time module.
To clarify this reweighting, we characterize the resulting gradient
and separate response-level weighting from within-response allocation,
enabling controlled evaluation of token--coefficient correspondence.

Our contributions are threefold.
\begin{itemize}
    \item We introduce a context-relative visual coordinate
    that contrasts image affinity with prompt-background
    affinity, providing an explicit visual measure for
    credit estimation.

    \item We develop an outcome-conditioned calibration
    method combining visual and predictive features
    via advantage-side contrasts and regularized
    within-side variation, with no fixed positive update preference for visual affinity.

    \item We characterize how calibrated evidence enters
    policy updates and distinguish response-level reweighting
    from allocation within a response, establishing an
    analytical basis for controlled evaluation of token
    credit assignment.
\end{itemize}
\section{Related Work}
\paragraph{Policy Optimization and Perceptual Supervision for Multimodal Reasoning.}
RLVR improves reasoning through outcome verification. GRPO~\citep{GRPO} employs group-relative advantage estimation, while GSPO~\citep{GSPO} introduces sequence-level importance ratios for policy optimization. In multimodal settings, MM-Eureka~\citep{Mm-eureka} investigates large-scale training with rule-based rewards, and VL-Rethinker~\citep{Vl-rethinker} improves training through selective sample replay and forced reflection. Another line of work refines supervision. PAPO~\citep{PAPO} constructs an implicit perceptual objective through image perturbations, whereas VisualPRM~\citep{VisualPRM} trains a separate process reward model to evaluate intermediate steps and support candidate solution selection. These studies primarily advance optimization mechanisms, training procedures, or supervision sources. Complementing these efforts, our work investigates how visual representations can inform fine-grained credit estimation under outcome feedback, rather than redesigning sampling strategies or introducing an independent process evaluator.

\paragraph{Visually Guided Fine-Grained Credit Assignment.}
VPPO~\citep{VPPO} and PGPO~\citep{PGPO} exploit changes in predictive distributions across visual conditions to filter token updates or redistribute advantages. PRPO~\citep{PRPO}{} further combines strong and weak perturbations to distinguish visual dependence from prediction instability. VGPO~\citep{VGPO} derives attention-related scores from visual prototypes and combines late-stage compensation with advantage reweighting within and across trajectories. \texttt{KAWHI}~\citep{KAWHI} connects structured visual evidence to RLVR through geometry-guided key-region selection, query-key alignment in vision-critical heads, and paragraph-level credit redistribution. These methods explicitly connect visual signals to policy updates. However, visual sensitivity, representational affinity, and region alignment do not, by themselves, determine a token's contribution to outcome improvement. Our approach instead treats visual association as a feature requiring calibration. We construct a context-relative visual coordinate and determine its evidential contribution through statistical contrasts between positive- and negative-advantage responses and within-side variance, thereby separating the measurement of visual relevance from the allocation of learning credit.
\section{Method}
\label{sec:ovc}

\begin{figure*}[t]
    \centering
    \includegraphics[width=\linewidth]{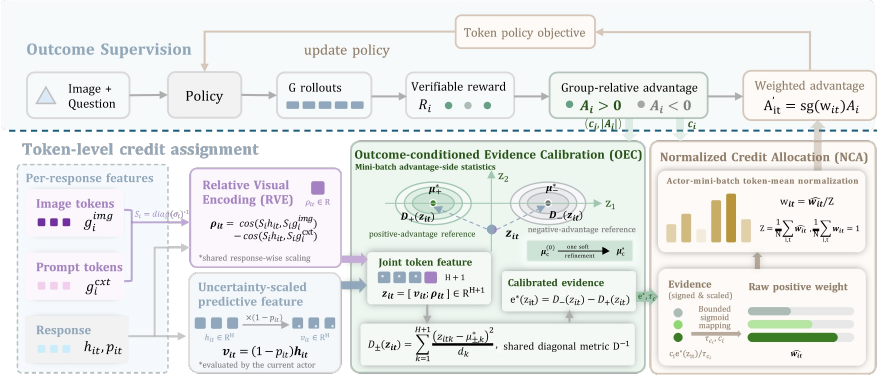}
    \caption{\textbf{Overview of Outcome-Conditioned Visual Credit (OVC).} An image--question pair produces a group of responses whose rewards yield group-relative advantages. For each response token, OVC combines relative image--context affinity with a prediction-aware representation, calibrates this evidence across advantage groups, and assigns normalized token weights to the policy update.}
    \label{fig:pipeline}
\end{figure*}

Visual relevance describes an association with the image, whereas
learning credit determines the strength of a policy update. We treat
the former as a measurement to be calibrated against outcome feedback,
rather than as a quantity to maximize.
We first characterize the calibration rule,
then specify its estimator and policy objective.

\paragraph{Setting and notation.}
For each multimodal input $x=(I,q)$, a behavior policy
$\pi_{\mathrm{old}}$ samples $G$ responses.
Response $y_i=(y_{it})_{t=1}^{T_i}$ receives a verifiable reward $R_i$
and a group-relative advantage~\citep{GRPO}
\begin{equation}
    A_i
    =
    \frac{R_i-\overline R_{g(i)}}{s_{g(i)}+\epsilon_A},
    \qquad
    c_i=\operatorname{sign}(A_i),
    \label{eq:advantage}
\end{equation}
where $g(i)$ contains responses to the same input, $\overline R_{g(i)}$
and $s_{g(i)}$ are the corresponding reward mean and standard deviation,
and $\epsilon_A>0$.
The sign $c_i$ indicates relative reward, not token correctness.
An actor mini-batch contains $B$ nonempty responses and
$N=\sum_i T_i$ valid tokens.
Advantages are computed within each rollout group, whereas the
reference statistics and coefficient normalization are aggregated
over the actor mini-batch.
All credit statistics are detached and fixed during its optimizer
step. Sums below exclude padding.

\subsection{A Calibration Principle for Visual Credit}
\label{sec:principle}

Let $z_{it}=[v_{it};\rho_{it}]$ combine a predictive feature
$v_{it}$ with a scalar visual measurement $\rho_{it}$.
For $c\in\{-1,+1\}$, let
$\mathcal T_c=\{(i,t)\mid c_i=c,\ 1\leq t\leq T_i\}$
contain the valid response tokens on the corresponding
advantage side.
We initialize the reference means using advantage-magnitude
weights and assume $M_c>0$ for both advantage sides
\begin{equation}
\begin{aligned}
    \mu_c^{(0)}
    &= \frac{1}{M_c}
    \sum_{(i,t)\in\mathcal T_c}|A_i|z_{it},\\
    M_c
    &= \sum_{(i,t)\in\mathcal T_c}|A_i|.
\end{aligned}
\label{eq:reference-initialization}
\end{equation}
These statistics are aggregated across the actor mini-batch, rather
than estimated separately for each input.
The reference-based calibration below assumes $M_+>0$ and $M_->0$.
The two sides are defined by the sign of the sequence advantage,
not by token-level correctness labels.

We write $\mu_+,\mu_-$ generically for the reference means
before or after refinement.
Their refinement and the construction of a positive diagonal
matrix $D$ encoding regularized within-side variation are
specified in Section~\ref{sec:estimator}.
We calibrate a token feature against both advantage sides by
comparing its squared distances to the two references under
the shared metric $D^{-1}$:
\begin{equation}
    e(z)
    =
    \|z-\mu_-\|_{D^{-1}}^2
    -
    \|z-\mu_+\|_{D^{-1}}^2,
    \label{eq:calibration}
\end{equation}
where $\|a\|_{D^{-1}}^2=a^\top D^{-1}a$.
Positive evidence indicates that $z$ is closer to the
positive-advantage reference, while negative evidence indicates
the opposite.

For fixed reference statistics, define the variance-calibrated direction
\[
    b^*=D^{-1}\Delta\mu,
    \qquad
    \Delta\mu=\mu_+-\mu_-.
\]
Direct expansion of the distance contrast in
Eq.~\eqref{eq:calibration} gives
\begin{equation}
    e(z)
    =
    2(b^*)^\top(z-\overline\mu),
    \qquad
    \overline\mu=\frac{\mu_++\mu_-}{2}.
    \label{eq:evidence}
\end{equation}
Each coordinate therefore contributes through its advantage-side
mean contrast relative to its regularized within-side scale, rather
than through its magnitude alone.
For $\Delta\mu\neq0$, Appendix~\ref{app:calibration} further shows
that $b^*$ maximizes the squared reference separation normalized by
a regularized diagonal variation surrogate, up to nonzero rescaling.
This provides a fixed-statistic characterization of the calibration
rule, not an auxiliary objective optimized by the actor.

\paragraph{When does visual relevance influence credit?}
With $D=\operatorname{diag}(D_h,d_\rho)$,
Eq.~\eqref{eq:calibration} decomposes as
\begin{equation}
    \boxed{
        e(z)
        =
        e_h(v)
        +
        2
        \underbrace{
            \frac{\Delta\mu_\rho}{d_\rho}
        }_{\text{outcome-dependent coefficient}}
        (\rho-\overline\mu_\rho).
    }
    \label{eq:visual-decomposition}
\end{equation}
For fixed statistics, the direct visual term vanishes when
$\Delta\mu_\rho=0$ and reverses its dependence on $\rho$ when the
contrast changes sign.
We use the signed evidence $c_i e(z_{it})$ to measure agreement with
the response's update direction.
Thus, larger relative visual affinity receive no unconditional
positive credit bonus.
Appendix~\ref{app:visual} derives the conditional sensitivities of the
evidence and raw coefficients and distinguishes them from causal
attribution. This motivates separating visual-relevance measurement
from credit assignment.

\subsection{Estimating Relative Visual Evidence}
\label{sec:estimator}

\paragraph{Relative Visual Encoding.}
Let $h_{it}\in\mathbb R^H$ be the state at a fixed intermediate layer
aligned with the prediction of $y_{it}$.
Let $H_{is}$ denote states at the same layer over the full input
sequence. Image positions $\mathcal V_i$ and non-image prompt
positions $\mathcal C_i$ define
\begin{equation}
    g_i^{\mathrm{img}}
    =
    \frac{1}{|\mathcal V_i|}
    \sum_{s\in\mathcal V_i}H_{is},
    \qquad
    g_i^{\mathrm{ctx}}
    =
    \frac{1}{|\mathcal C_i|}
    \sum_{s\in\mathcal C_i}H_{is}.
    \label{eq:prototypes}
\end{equation}
The context prototype is obtained from the original multi-modal input,
not a separate text-only forward.
Let $S_i=\operatorname{diag}(\sigma_i)^{-1}$, where $\sigma_i$ contains
floored response-wise population standard deviations of $h_{it}$.
Applying the same scaling to the predictive state and both prototypes,
RVE computes
\begin{equation}
    \rho_{it}
    =
    \cos(S_i h_{it},S_i g_i^{\mathrm{img}})
    -
    \cos(S_i h_{it},S_i g_i^{\mathrm{ctx}}).
    \label{eq:rve}
\end{equation}
This measures image affinity relative to prompt context.
The mean used to estimate $\sigma_i$ is not subtracted in the cosine
calculation.
Appendix~\ref{app:features} gives the estimator and its metric
interpretation.

We retain sampled-token prediction information with an
uncertainty-scaled feature
\begin{equation}
    v_{it}=(1-p_{it})h_{it},
    \qquad
    z_{it}=[v_{it};\rho_{it}],
    \label{eq:joint-feature}
\end{equation}
where
$p_{it}=\pi_\theta(y_{it}\mid x_i,y_{i,<t})$
is evaluated by the current actor.
Only $v_{it}$ is multiplied by $1-p_{it}$.
Using a state from the selected intermediate layer makes $v_{it}$ a
gradient-inspired predictive feature rather than an exact parameter
gradient (Appendix~\ref{app:proxy}).

\paragraph{Outcome-conditioned Evidence Calibration.}
Given $\mu_c^{(0)}$ and $M_c$ from
Eq.~\eqref{eq:reference-initialization},
OEC estimates advantage-weighted within-side variation
over the actor mini-batch:
\begin{equation}
    q_k
    =
    \sum_{c\in\{-1,+1\}}
    \frac{1}{M_c}
    \sum_{(i,t)\in\mathcal T_c}
    |A_i|
    \left(z_{itk}-\mu_{c,k}^{(0)}\right)^2.
    \label{eq:moments}
\end{equation}
For all $H+1$ coordinates, let
$\overline q=H^{-1}\sum_{k=1}^{H}q_k$.
We construct the diagonal scale matrix
$D=\operatorname{diag}(d)$ using shrinkage and a variance floor:
\begin{equation}
    d_k = \epsilon_D + \max\left\{
    \begin{aligned}
        &(1-\eta)q_k+\eta\overline q,\\[-2pt]
        &r_{\min}\overline q
    \end{aligned}
    \right\}.
    \label{eq:metric}
\end{equation}
Here $0\leq\eta\leq1$, $r_{\min}\geq0$, and $\epsilon_D>0$,
ensuring $d_k\geq\epsilon_D$ and $D\succ0$.
The visual coordinate is excluded from the shrinkage target
$\overline q$ but receives the same regularization.
This is a diagonal scale estimate, not a full covariance matrix.

We perform one soft reference refinement.
Let $e^{(0)}$ use the initial references in Eq.~\eqref{eq:evidence},
and define $\tau_c$ as the unweighted population standard deviation
of $c e^{(0)}(z_{it})$ over $\mathcal T_c$, with a strictly positive
floor. We then compute
\begin{equation}
\begin{aligned}
    u_{it}^{(0)}
    &=
    \operatorname{sigmoid}\!\left(
        \frac{c_i e^{(0)}(z_{it})}{\tau_{c_i}}
    \right),\\[-2pt]
    \mu_c^*
    &=
    \frac{
        \sum_{\mathcal T_c}|A_i|u_{it}^{(0)}z_{it}
    }{
        \sum_{\mathcal T_c}|A_i|u_{it}^{(0)}
    }.
\end{aligned}
\label{eq:refinement}
\end{equation}
The final score $e^*$ uses $\mu_+^*,\mu_-^*$ and the same $D$;
the final coefficient mapping also retains the pre-refinement $\tau_c$.
This makes the visual coefficient in
Eq.~\eqref{eq:visual-decomposition} outcome-dependent after
refinement as well.
Appendix~\ref{app:refinement} specifies the ordering and derives the
soft assignment and reference update.

\subsection{Credit Allocation and Policy Optimization}
\label{sec:optimization}

\paragraph{Normalized Credit Allocation.}
For $0<\lambda_{\min}\leq\lambda_{\max}$, NCA maps signed evidence
to positive raw coefficients and applies token-mean normalization
over the actor mini-batch:
\begin{equation}
\begin{aligned}
    \widehat w_{it}
    &= \lambda_{\min}
    +(\lambda_{\max}-\lambda_{\min})\\[-2pt]
    &\quad\cdot\operatorname{sigmoid}\!\left(
        \frac{c_i e^*(z_{it})}{\tau_{c_i}}
    \right),\\[-1pt]
    Z
    &= \frac{1}{N}\sum_{i,t}\widehat w_{it},
    \quad w_{it}=\frac{\widehat w_{it}}{Z},\\[-1pt]
    A'_{it}
    &= \operatorname{sg}(w_{it})A_i.
\end{aligned}
\label{eq:weights}
\end{equation}
For $A_i=0$, we instead set
$\widehat w_{it}=\lambda_{\min}$ and include it in $Z$.
Positive coefficients preserve advantage signs.
Appendix~\ref{app:bounds} gives their final bounds.
Normalization is global rather than per response, so the estimator
can adjust both relative response strength and allocation within
a response. It fixes the global mean coefficient, but does not
constrain the norm of the resulting policy gradient.

We apply these coefficients through GSPO-token.
With
$L_{it}(\theta)=\log\pi_\theta(y_{it}\mid x_i,y_{i,<t})$,
define
\begin{equation}
    \begin{aligned}
        s_i(\theta)
        &=
        \exp\left[
            \frac1{T_i}
            \sum_t
            \bigl(
                L_{it}(\theta)
                -
                L_{it}(\theta_{\mathrm{old}})
            \bigr)
        \right],
        \\
        \widetilde s_{it}(\theta)
        &=
        \operatorname{sg}(s_i(\theta))
        \exp\bigl(
            L_{it}(\theta)
            -
            \operatorname{sg}(L_{it}(\theta))
        \bigr).
    \end{aligned}
    \label{eq:ratio}
\end{equation}
Here $\operatorname{sg}$ stops gradients.
Let $\widetilde s_{it}^{\,c}
=\operatorname{clip}(
\widetilde s_{it},1-\epsilon_l,1+\epsilon_h)$.
The surrogate to maximize is
\begin{equation}
\mathcal J_{\mathrm{OVC}}(\theta)
=\frac{1}{B}\sum_i\frac{1}{T_i}\sum_t
\min\bigl\{
\widetilde s_{it}A'_{it},
\widetilde s_{it}^{\,c}A'_{it}
\bigr\}.
\label{eq:objective}
\end{equation}
Although $\widetilde s_{it}=s_i$ numerically, its derivative remains
token-specific.
Appendices~\ref{app:gradient} and~\ref{app:allocation} derive this
gradient and separate response-level weighting from token allocation.
No gradients pass through the credit estimator.
The policy loss can still update the entire actor, including its
visual encoder.
OVC reuses the sampled responses and adds no learnable head or
inference-time module.
Algorithm~\ref{alg:ovc} and Appendix~\ref{app:implementation}
describe the actual multi-pass training computation.
\section{Experiment}
\subsection{Experimental Setup}

\paragraph{Models and training data.}
We evaluate Qwen3-VL-4B-Instruct,
Qwen3-VL-8B-Instruct~\citep{qwen3}, and
Qwen3.5-9B~\citep{qwen35}.
For main results, the Qwen3-VL models are trained on
ViRL39K~\citep{Vl-rethinker}, while Qwen3.5-9B uses
Math77K, the 77,135 mathematical subset of
DeepVision-103K~\citep{deepvision}.
Within each main-comparison setting, all methods share
the same instruction-tuned initialization.

\paragraph{Benchmarks.}
For main results, we report
MMMU~\citep{mmmu}, MathVerse~\citep{mathverse},
MathVision~\citep{mathvision}, We-Math~\citep{wemath},
MathVista~\citep{mathvista}, and MMK12-val~\citep{Mm-eureka}.
The first five use LMMs-Eval~\citep{lmmseval} 
with matched decoding and scoring within each model size. 
MMK12-val follows the fixed training-time validation protocol. 
Full settings are given in Appendix~\ref{app:evaluation}.

\subsection{Experimental Results}

\label{sec:comparison_methods}


\subsubsection{Main Results}

\label{sec:main_results}

\providecommand{\avggain}[1]{%
    \textsuperscript{\textcolor{red}{+#1}}%
}

\begin{table*}[t]
    \centering
    \small
    \setlength{\tabcolsep}{3pt}
    \renewcommand{\arraystretch}{0.88}
    \setlength{\aboverulesep}{1pt}
    \setlength{\belowrulesep}{1pt}

    \caption{
        \textbf{Main results on six multimodal benchmarks.}
        Scores (\%) are averaged over four evaluation 
        sampling seeds at fixed checkpoints.
        Bold and underlined values indicate the best and
        second-best distinct scores within each model group.
        Avg.\ is the six-benchmark mean; red superscripts
        denote gains over Base.
        Qwen3.5-9B uses larger-scale Math77K training data.
        OOM denotes out of memory.
    }
    \label{tab:main_results}

    \begin{tabularx}{\textwidth}{
        @{}l
        *{7}{>{\centering\arraybackslash}X}
        @{}
    }
        \toprule
        \textbf{Method}
        & \textbf{MMMU}
        & \textbf{MathVision}
        & \textbf{We-Math}
        & \textbf{MathVista}
        & \textbf{MMK12-val}
        & \textbf{MathVerse}
        & \textbf{Avg.} \\
        \midrule

        \multicolumn{8}{@{}l}{
            \textit{Qwen3-VL-4B}
        } \\

        Base
        & \underline{46.56} & 21.71 & 75.52
        & 73.83 & 54.73 & 63.12 & 55.91 \\

        GSPO~\citep{GSPO}
        & 44.67 & 24.58 & \underline{76.52}
        & \textbf{74.83} & 67.14 & 64.29
        & 58.67\avggain{2.76} \\

        DelTA~\citep{DelTA}
        & 44.33 & \underline{25.00} & 76.00
        & 74.00 & \textbf{74.58} & \underline{64.67}
        & \underline{59.76}\avggain{3.85} \\

        VPPO~\citep{VPPO}
        & 45.21 & 23.12 & 76.43
        & 74.19 & 69.23 & 64.23
        & 58.74\avggain{2.83} \\

        AT-RL~\citep{AT-RL}
        & 43.52 & 22.78 & 76.33
        & 74.22 & \underline{70.22} & 63.45
        & 58.42\avggain{2.51} \\

        \rowcolor{blue!10}
        \textbf{GSPO+OVC (Ours)}
        & \textbf{46.83} & \textbf{26.64}
        & \textbf{77.05} & \underline{74.37}
        & \textbf{74.58} & \textbf{64.80}
        & \textbf{60.71}\avggain{4.80} \\

        \midrule

        \multicolumn{8}{@{}l}{
            \textit{Qwen3-VL-8B}
        } \\

        Base
        & 48.78 & 24.67 & 80.23
        & 74.40 & 64.46 & 62.84
        & 59.23 \\

        GSPO~\citep{GSPO}
        & 44.56 & 27.30 & 83.33
        & 76.10 & 74.73 & 63.53
        & 61.59\avggain{2.36} \\

        DelTA~\citep{DelTA}
        & 49.21 & 24.01 & \textbf{85.40}
        & \textbf{77.03} & \underline{77.06}
        & \underline{64.82}
        & \underline{62.92}\avggain{3.69} \\

        VPPO~\citep{VPPO}
        & \underline{49.43} & \underline{27.69}
        & \underline{84.36} & 76.21
        & 75.29 & 64.09
        & 62.85\avggain{3.62} \\

        AT-RL~\citep{AT-RL}
        & \texttt{OOM} & \texttt{OOM}
        & \texttt{OOM} & \texttt{OOM}
        & \texttt{OOM} & \texttt{OOM}
        & \texttt{OOM} \\

        \rowcolor{blue!10}
        \textbf{GSPO+OVC (Ours)}
        & \textbf{49.56} & \textbf{29.93}
        & 83.45 & \underline{76.97}
        & \textbf{78.94} & \textbf{65.05}
        & \textbf{63.98}\avggain{4.75} \\

        \midrule

        \rowcolor{teal!8}
        \multicolumn{8}{@{}l}{
            \textit{Qwen3.5-9B}
            \quad
            \textcolor{teal!65!black}{
                \textit{(Larger-scale training: Math77K)}
            }
        } \\

        Base
        & 45.67 & \underline{36.18}
        & \underline{87.24} & \underline{79.20}
        & 79.09 & \underline{77.36}
        & 67.46 \\

        GSPO~\citep{GSPO}
        & \underline{46.78} & 34.87
        & 87.18 & \underline{79.20}
        & \underline{87.74} & \textbf{77.74}
        & \underline{68.92}\avggain{1.46} \\

        \rowcolor{blue!10}
        \textbf{GSPO+OVC (Ours)}
        & \textbf{50.33} & \textbf{40.46}
        & \textbf{88.51} & \textbf{80.30}
        & \textbf{88.28} & 76.80
        & \textbf{70.78}\avggain{3.32} \\

        \bottomrule
    \end{tabularx}
\end{table*}

We compare with the instruction-tuned Base model, GSPO~\citep{GSPO},
DelTA~\citep{DelTA}, VPPO~\citep{VPPO}, and AT-RL~\citep{AT-RL}.
OVC applies its detached weights through GSPO-token; 
uniform weights recover the GSPO gradient (Appendix~\ref{app:gradient}).

\paragraph{Consistent gains across model scales.}
Table~\ref{tab:main_results} shows that OVC improves the six-task
average over GSPO by \ovcdelta{+2.04}, \ovcdelta{+2.39}, 
and \ovcdelta{+1.86} points on 4B, 8B, and 9B, respectively, 
and exceeds Base by \ovcdelta{+4.80}, \ovcdelta{+4.75}, 
and \ovcdelta{+3.32} points.
OVC also achieves the highest average among all compared RL methods
on both Qwen3-VL scales, with the gain extending to Qwen3.5-9B
trained on the larger-scale Math77K corpus.

\paragraph{Performance across multimodal benchmarks.}
The largest gains over GSPO occur on MMK12-val, MathVision, and MMMU:
\ovcdelta{+7.44}/\ovcdelta{+4.21} points on MMK12-val for 4B/8B,
\ovcdelta{+5.59} on MathVision for 9B, and
\ovcdelta{+2.16}/ \ovcdelta{+5.00}/\ovcdelta{+3.55} on MMMU.
OVC also reverses the Base-to-GSPO drop on MMMU for 4B/8B,
reaching 46.83/49.56 versus Base scores of 46.56/48.78.
On 9B, the only regression relative to GSPO is MathVerse
(\ovcdelta{-0.94} points).

\paragraph{Comparison with non-visual discriminative credit.}
OVC exceeds DelTA's six-task average by \ovcdelta{+0.95} points 
on 4B and \ovcdelta{+1.06} on 8B. 
It leads DelTA on five benchmarks and ties on MMK12-val at 4B. 
At 8B, its largest gains are MathVision (\ovcdelta{+5.92}) and 
MMK12-val (\ovcdelta{+1.88}), while DelTA remains stronger on 
We-Math and nearly tied on MathVista.

\begin{table*}[!htbp]
    \centering

    \definecolor{ovcGroup}{HTML}{F2F4F7}
    \definecolor{ovcAccent}{HTML}{314F70}
    \definecolor{ovcDrop}{HTML}{A65A5A}

    \small
    \renewcommand{\arraystretch}{0.85}
    \setlength{\tabcolsep}{3pt}
    \setlength{\aboverulesep}{1pt}
    \setlength{\belowrulesep}{1pt}

    \caption{
        \textbf{Ablation study of OVC on Qwen3-VL-8B.}
        Base and GSPO are from Table~\ref{tab:main_results}.
        Scores are percentages; Avg.\ denotes the four-benchmark mean.
        $\Delta$ indicates the change from full OVC in percentage points.
        Best scores are bold.
        See Appendix~\ref{app:ablation-configs} for details.
    }
    \label{tab:ovc_ablation}

    \begin{tabularx}{\textwidth}{
        @{}
        >{\raggedright\arraybackslash}X
        *{5}{>{\centering\arraybackslash}p{1.7cm}}
        >{\raggedleft\arraybackslash
          \leavevmode\color{ovcDrop}}p{1.0cm}
        @{}
    }
        \toprule
        \textbf{Configuration}
        & \textbf{MathVision}
        & \textbf{MathVista}
        & \textbf{MathVerse}
        & \textbf{MMK12-val}
        & \textbf{Avg.}
        & \multicolumn{1}{r@{}}{\boldmath$\Delta$} \\
        \midrule

        Base
        & 24.67 & 74.40 & 62.84 & 64.46
        & 56.59 & $-6.13$ \\

        GSPO (uniform credit)
        & 27.30 & 76.10 & 63.53 & 74.73
        & 60.42 & $-2.30$ \\

        \rowcolor{blue!8}
        \textbf{OVC (full)}
        & \textbf{29.93}
        & \textbf{76.97}
        & \textbf{65.05}
        & \textbf{78.94}
        & \textbf{62.72}
        & \textcolor{black}{---} \\

        \addlinespace[1pt]
        \rowcolor{ovcGroup}
        \multicolumn{7}{@{}l@{}}{
            \textcolor{ovcAccent}{
                \textbf{(a) Relative visual encoding}
            }
        } \\

        No visual coordinate
        & 28.21 & 74.97 & 63.46 & 75.43
        & 60.52 & $-2.20$ \\

        Absolute image affinity
        & 28.03 & 75.63 & 63.98 & 76.31
        & 60.99 & $-1.73$ \\

        Separate affinity coordinates
        & 28.89 & 75.59 & 63.83 & 74.20
        & 60.63 & $-2.09$ \\

        No response-wise scaling ($S_i=I$)
        & 28.89 & 74.99 & 63.79 & 75.32
        & 60.75 & $-1.97$ \\

        \addlinespace[1pt]
        \rowcolor{ovcGroup}
        \multicolumn{7}{@{}l@{}}{
            \textcolor{ovcAccent}{
                \textbf{(b) Within-side variance calibration}
            }
        } \\

        Identity metric ($D=I$)
        & 28.12 & 75.34 & 64.25 & 76.21
        & 60.98 & $-1.74$ \\

        Total-variance calibration
        & 28.09 & 75.91 & 64.12 & 74.33
        & 60.61 & $-2.11$ \\

        No visual coordinate + $D=I$
        & 28.64 & 75.23 & 63.67 & 75.41
        & 60.74 & $-1.98$ \\

        \addlinespace[1pt]
        \rowcolor{ovcGroup}
        \multicolumn{7}{@{}l@{}}{
            \textcolor{ovcAccent}{
                \textbf{(c) Token-specific weight allocation}
            }
        } \\

        Response-mean weights
        & 28.06 & 75.39 & 64.88 & 76.32
        & 61.16 & $-1.56$ \\

        Within-response permutation
        & 27.54 & 75.14 & 62.43 & 74.28
        & 59.85 & $-2.87$ \\

        Affinity-ranked weights
        & 28.57 & 76.86 & 64.39 & 76.34
        & 61.54 & $-1.18$ \\

        \bottomrule
    \end{tabularx}
\end{table*}

\paragraph{Gains from strong initializations.}
These gains are measured on strong,
already post-trained models.
Prior exposure to related visual-mathematical problems may reduce
the marginal gains from additional RL, although sample-level overlap
with the original training corpora remains unverified.
Under matched initializations, RL data, and rollout configurations,
OVC's gains are consistent with more effective use of existing
representations and outcome feedback through finer-grained credit
allocation.
These results suggest that improved credit assignment remains
beneficial even with substantial task-relevant competence.

\subsubsection{Ablation Studies}
\label{sec:ablation}

Table~\ref{tab:ovc_ablation} examines OVC on Qwen3-VL-8B across
relative visual encoding, within-side variance calibration, and
token-specific weight allocation.
Full OVC achieves a four-task mean of 62.72, exceeding the
uniform-credit GSPO reference (60.42) by 2.30 percentage points.
Configuration definitions are provided in
Appendix~\ref{app:ablation-configs}.

\paragraph{Relative visual encoding.}
Removing the visual coordinate, using absolute image affinity, or
using separate image/context coordinates lowers the four-task mean
by 2.20, 1.73, and 2.09 points, respectively.
Separate coordinates retain both affinity measurements yet
underperform their explicit difference, supporting the use of
a context-relative visual representation.
Removing response-wise scaling ($S_i=I$) further reduces the
mean by 1.97 points, indicating that pre-cosine 
geometry contributes to evidence estimation.

\paragraph{Within-side variance calibration.}
The identity metric and total-variance calibration lower the mean
by 1.74 and 2.11 points, respectively.
Setting $D=I$ retains the advantage-side reference contrast but
removes variance scaling.
Adding the relative visual coordinate increases the mean from
60.52 to 62.72 with within-side calibration, compared with a
smaller increase from 60.74 to 60.98 under the identity metric.
Conversely, within-side calibration slightly reduces performance
without the visual coordinate (60.74 to 60.52), but substantially
improves it with the coordinate (60.98 to 62.72).
These results yield a positive interaction of 1.96 points,
suggesting that relative visual encoding and variance calibration
are complementary rather than independently sufficient.

\paragraph{Token-specific weight allocation.}
Response-mean, within-response permutation, and affinity-ranked
weights reduce the mean by 1.56, 2.87, and 1.18 points, respectively.
Response averaging preserves each response's mean coefficient,
while permutation additionally preserves its complete weight
multiset but disrupts token--weight correspondence.
Their deficits support the importance of token-specific allocation
beyond response-level reweighting or nonuniform coefficients alone.
Affinity-ranked weights also underperform full OVC, suggesting
that directly favoring higher visual affinity cannot reproduce
outcome-conditioned allocation.
Although 9/10 ablations remain numerically above GSPO,
none matches full OVC, with within-response permutation even
falling below the uniform-credit reference.

\subsubsection{Chart-centric Reasoning}
\label{sec:chart_reasoning}
We additionally evaluate OVC on chart-centric reasoning with
Qwen3-VL-4B-Instruct. 
After SFT on 100K ChartVerse examples~\citep{chartverse}, 
the resulting model serves as Base.
GSPO and GSPO+OVC start from Base and train
for 72 rollout steps on the same 18,280-question subset of ChartVerse-RL-40K.
Training and evaluation details are provided in
Appendices~\ref{app:chart_training} and~\ref{app:chart_eval}.


\begin{table}[htbp]
    \centering
    \definecolor{ovcPurple}{HTML}{F2ECFA}
    \definecolor{ovcGain}{HTML}{B24A4A}

    \caption{
        \textbf{Chart-centric reasoning on Qwen3-VL-4B.}
        Scores (\%) are averaged over three evaluation
        sampling seeds.
        Avg.\ is the six-benchmark mean.
        Bold and underlined values denote the best
        and second-best results.
        Red superscripts indicate gains over Base.
    }
    \label{tab:chart_results}

    \footnotesize
    \setlength{\tabcolsep}{2.5pt}
    \renewcommand{\arraystretch}{0.95}

    \resizebox{\columnwidth}{!}{%
    \begin{tabular}{@{}lccccccc@{}}
        \toprule
        \textbf{Method}
        & \shortstack{\textbf{Chart}\\\textbf{QA-Pro}}
        & \shortstack{\textbf{Char}\\\textbf{Xiv}}
        & \shortstack{\textbf{Chart}\\\textbf{Museum}}
        & \shortstack{\textbf{ChartX-}\\\textbf{QA}}
        & \shortstack{\textbf{Evo}\\\textbf{Chart-QA}}
        & \shortstack{\textbf{Chart}\\\textbf{Bench}}
        & \textbf{Avg.} \\
        \midrule

        Base
        & 37.69
        & 62.66
        & 36.83
        & 63.69
        & 47.39
        & 62.08
        & 51.72 \\

        GSPO
        & \underline{40.56}
        & \underline{63.47}
        & \underline{41.57}
        & \textbf{66.55}
        & \underline{47.60}
        & \underline{65.03}
        & \underline{54.13}
          \textsuperscript{\textcolor{ovcGain}{+2.41}} \\

        \rowcolor{ovcPurple}
        \textbf{OVC (Ours)}
        & \textbf{44.18}
        & \textbf{64.77}
        & \textbf{43.42}
        & \underline{66.30}
        & \textbf{48.28}
        & \textbf{65.18}
        & \textbf{55.36}
          \textsuperscript{\textcolor{ovcGain}{+3.64}} \\

        \bottomrule
    \end{tabular}%
    }

    
\end{table}
As shown in Table~\ref{tab:chart_results}, 
OVC achieves the best performance on 5/6 
benchmarks~\citep{chartqapro, charxiv, chartmuseum, chartx, evochart, chartbench}, 
improving the average score from 54.13 to 55.36 (+1.23 points) 
over GSPO and exceeding Base by 3.64 points. 
The largest gains over GSPO occur on ChartQA-Pro (+3.62) and ChartMuseum (+1.85), 
while ChartX-QA shows a slight decrease (-0.25). 
These results support the applicability of OVC to
chart-focused reinforcement learning under matched
initialization, training data, and rollout budgets.

\subsubsection{Extension to Other Policy Optimizers}
\label{sec:optimizer_extension}
To examine whether OVC is applicable beyond GSPO,
we extend it to DAPO and GRPO, which also use
shared response-level advantages for token updates.
Experiments are conducted on Qwen2.5-VL-7B~\citep{qwen25vl}
trained with ViRL39K.


\begin{table}[!htbp]
    \centering
    \definecolor{ovcPurple}{HTML}{F2ECFA}

    \footnotesize
    \setlength{\tabcolsep}{2.5pt}
    \renewcommand{\arraystretch}{0.95}

    \caption{
        \textbf{Extending OVC to alternative policy optimizers.} 
        Results on Qwen2.5-VL-7B trained on ViRL39K.
        Avg.\ is the mean across six benchmarks. 
        Bold marks the better result in each pair.
    }
    \label{tab:ovc_other_optimizers}

    \resizebox{\columnwidth}{!}{%
    \begin{tabular}{@{}lccccccc@{}}
        \toprule

        \textbf{Method}
        & \textbf{MMMU}
        & \shortstack{\textbf{Math}\\\textbf{Vision}}
        & \shortstack{\textbf{We-}\\\textbf{Math}}
        & \shortstack{\textbf{Math}\\\textbf{Vista}}
        & \shortstack{\textbf{MMK12}\\\textbf{-val}}
        & \shortstack{\textbf{Math}\\\textbf{Verse}}
        & \textbf{Avg.} \\

        \midrule

        DAPO
        & 50.00
        & 14.80
        & 69.54
        & 70.67
        & \textbf{76.88}
        & 45.58
        & 54.58 \\

        \rowcolor{ovcPurple}
        \textbf{DAPO+OVC}
        & \textbf{52.78}
        & \textbf{20.39}
        & \textbf{70.92}
        & \textbf{71.30}
        & 76.58
        & \textbf{45.89}
        & \textbf{56.31} \\

        \midrule

        GRPO
        & 51.11
        & 14.80
        & 69.37
        & 70.20
        & 74.88
        & 45.00
        & 54.23 \\

        \rowcolor{ovcPurple}
        \textbf{GRPO+OVC}
        & \textbf{52.22}
        & \textbf{15.13}
        & \textbf{70.63}
        & \textbf{70.27}
        & \textbf{75.50}
        & \textbf{46.17}
        & \textbf{54.99} \\

        \bottomrule
    \end{tabular}%
    }
\end{table}

As shown in Table~\ref{tab:ovc_other_optimizers}, 
OVC increases the average of 6 tasks by 1.73 and
0.76 points for DAPO and GRPO, respectively.
Improvements are observed on 5/6 benchmarks
for DAPO and all six for GRPO.
These results illustrate the applicability of OVC
to different policy optimization objectives,
rather than restricting token-level credit
reweighting to GSPO.

\subsubsection{Zero-shot Generalization}
\label{sec:zero_shot_generalization}

To assess transfer beyond the main evaluation suite, we evaluate
two groups of tasks: chart understanding and VQA, and scientific
and spatial reasoning. 
These transfer evaluations use the ViRL39K- and
Math77K-trained checkpoints from the main comparisons
and their corresponding instruction-tuned Base checkpoints.
\begin{figure}[!htbp]
    \centering
    \includegraphics[
      width=\linewidth,
      trim=2pt 24pt 0 11pt,
      clip
    ]{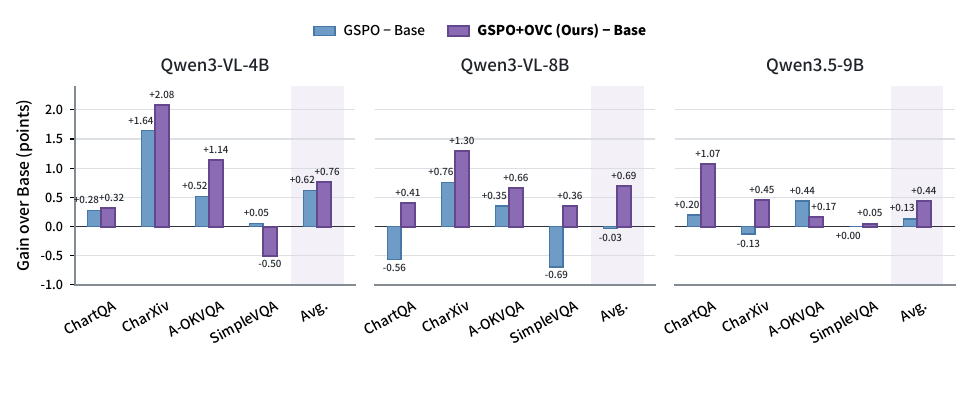}
    \caption{\textbf{Zero-shot transfer gains over Base.}
        Bars show score changes relative to the corresponding instruction-tuned
        Base model. Avg. is the mean over four VQA benchmarks.}
    \label{fig:vqa}
\end{figure}

\begin{figure*}[t]
  \centering
  \includegraphics[
      width=\linewidth,
      trim=0 9pt 0 6pt,
      clip
    ]{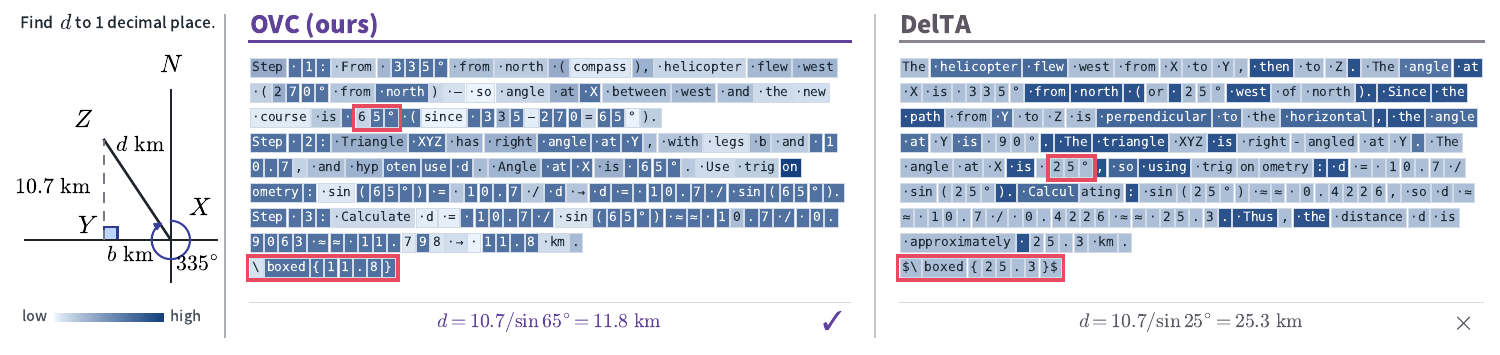}
  \caption{\textbf{Qualitative visualization of within-response
token-credit patterns.}
OVC correctly converts the bearing to the $65^\circ$ triangle angle
and obtains $d=11.8$ km, whereas DelTA first errs by using $25^\circ$
as the triangle angle and obtains $25.3$ km. Colors indicate the final
normalized token-credit weights $w_{it}$, recomputed offline from the
corresponding checkpoints. Red boxes mark the key angle conversions
and final answers. The OVC and DelTA responses have positive and
negative group-relative advantages, respectively.}
  \label{fig:credit_visualization}
\end{figure*}

\paragraph{General-purpose VQA.}

We evaluate ChartQA~\citep{chartqa},
CharXiv~\citep{charxiv},
A-OKVQA~\citep{a-okvqa}, and
SimpleVQA~\citep{simplevqa} across three model scales.
As shown in Figure~\ref{fig:vqa}, OVC improves
the four-task mean over GSPO by 0.14, 0.72, and
0.31 points on 4B, 8B, and 9B, respectively,
while consistently exceeding the corresponding Base.
The largest improvement over GSPO occurs on 8B,
where OVC achieves a 0.69-point gain over Base,
compared with a 0.03-point decrease for GSPO.
Benchmark descriptions, score aggregation, and absolute results
are provided in Appendix~\ref{app:vqa_generalization}.

\paragraph{Scientific and spatial reasoning.}

\begin{table}[!htbp]
    \centering
    \caption{
        \textbf{Zero-shot transfer to scientific and spatial benchmarks.}
        GSPO and OVC are trained on Math77K using Qwen3.5-9B.
        Avg.\ denotes the mean score across five benchmarks.
        The best results are highlighted in bold.
    }
    \label{tab:zero_shot_transfer}

    \scriptsize
    \setlength{\tabcolsep}{2.5pt}
    \renewcommand{\arraystretch}{1.10}

    \resizebox{\columnwidth}{!}{%
    \begin{tabular}{lcccccc}
        \toprule
        & \multicolumn{3}{c}{\textit{Science \& Domain-Specific}}
        & \multicolumn{2}{c}{\textit{Spatial \& Embodied AI}}
        & \\
        \cmidrule(lr){2-4}
        \cmidrule(lr){5-6}

        \textbf{Method}
        & \textbf{ScienceQA}
        & \textbf{RxnBench}
        & \textbf{SUPERChem}
        & \textbf{EmbSpatial}
        & \textbf{ERQA}
        & \textbf{Avg.} \\
        \midrule

        Qwen3.5-9B
            & 92.88 & 87.08 & 10.20
            & 73.19 & 44.50 & 61.57 \\

        GSPO
            & 93.15 & 88.32 & 12.30
            & 75.80 & 45.20 & 62.95 \\

        \rowcolor{blue!8}
        \textbf{OVC (Ours)}
            & \textbf{93.89}
            & \textbf{89.05}
            & \textbf{13.40}
            & \textbf{77.45}
            & \textbf{45.75}
            & \textbf{63.91} \\

        \bottomrule
    \end{tabular}%
    }
\end{table}

We further examine cross-domain transfer using
Qwen3.5-9B trained on Math77K, evaluating five
benchmarks~\citep{ScienceQA, RxnBench, SuperChem, EmbSpatial, ERQA}
covering scientific knowledge and spatial reasoning.
As shown in Table~\ref{tab:zero_shot_transfer},
OVC outperforms GSPO on all five benchmarks,
raising the average score from 62.95 to 63.91.
The largest gains are observed on EmbSpatial
(+1.65 points) and SuperChem (+1.10 points).
OVC also exceeds the Base model by 2.34 points
on average.
Together, these results suggest that the benefits
of outcome-conditioned token reweighting extend
beyond the visual-mathematical evaluation domain.
Benchmark descriptions and score aggregation are provided in
Appendix~\ref{app:science_spatial_generalization}.

\subsection{Visualizations}
\label{sec:visualizations}

\paragraph{Distribution of token credit weights.}
\label{sec:weight_distribution}

Figure~\ref{fig:credit_visualization} provides a qualitative
visualization of the token-credit patterns produced by OVC and DelTA.
OVC correctly converts the bearing to the $65^\circ$ interior angle
and obtains $d=11.8$ km. Its decisive reasoning span receives
relatively high credit. In contrast, DelTA uses $25^\circ$ as the
triangle angle and obtains $25.3$ km, while assigning relatively low
credit to the first erroneous reasoning span. This example illustrates
how OVC can better align token-level credit with visually grounded
reasoning steps.

\begin{figure}[!htbp]
    \centering
    \includegraphics[
      width=\linewidth,
      trim=1pt 7pt 0 1pt,
      clip
    ]{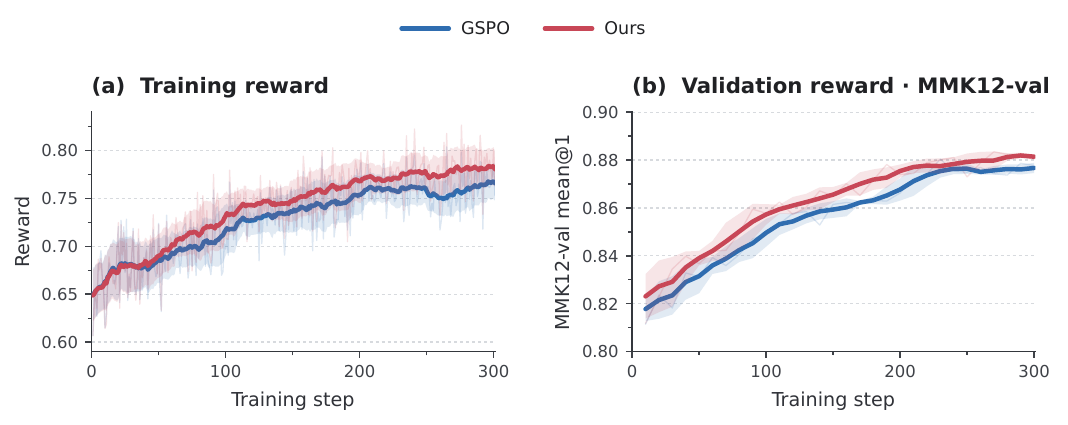}
    \caption{Training dynamics on Qwen3.5-9B.}
    \label{fig:training_curve}
\end{figure}
\paragraph{Training dynamics.}

Figure~\ref{fig:training_curve} compares OVC and GSPO during
Qwen3.5-9B training on Math77K. OVC generally achieves higher
training reward and MMK12-val performance over the training trajectory.
The consistent advantage in both optimization reward and validation
performance provides further evidence for the effectiveness of OVC
beyond the final-checkpoint comparison.

\begin{figure}[!htbp]
    \centering
    \includegraphics[
      width=\linewidth,
      trim=0 1pt 0 0,
      clip
    ]{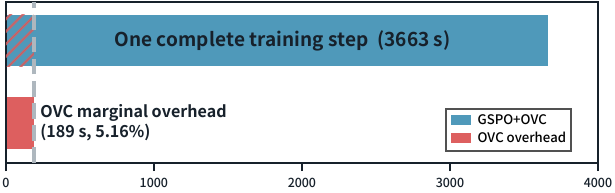}
    \caption{Training-time cost breakdown of OVC.}
    \label{fig:ovc_overhead}
\end{figure}

\subsection{Efficiency}
\label{sec:efficiency}

Figure~\ref{fig:ovc_overhead} shows the training-time cost breakdown. 
The OVC-specific credit-estimation passes account for
189\,s, or 5.16\% of the 3663\,s total step time. OVC adds no
learnable parameters or inference-time computation; its additional
training cost comes from four no-gradient statistics passes.
Implementation details are given in Appendix~\ref{app:implementation}.







\section{Conclusion}
We introduced OVC to address a central distinction in multimodal RLVR: visual association does not, by itself, determine learning credit under response-level feedback. OVC measures visual affinity relative to context, combines it with predictive features, and calibrates their contributions through advantage-side contrasts and regularized within-side variation. The resulting evidence modulates token-specific gradient contributions through positive, normalized weights, without changing the original advantage signs or sequence-level clipping. Our analysis further distinguishes response-level reweighting from within-response allocation, clarifying how calibrated coefficients affect individual token gradients. Experiments across three models and six benchmarks show improved average accuracy over GSPO without additional process supervision or inference-time modules. 
Overall, OVC connects visual representations with fine-grained policy
optimization by treating visual association as outcome-conditioned
evidence rather than direct update priority.

{
    \small
    \bibliographystyle{ieeenat_fullname}
    \bibliography{custom}
}

\clearpage
\appendix
\setcounter{page}{1}
\renewcommand{\thepage}{S\arabic{page}}
\maketitlesupplementary
\label{cvpr:supplement-start}
\section{Evaluation protocol}
\label{app:evaluation}

\subsection{Benchmarks and evaluation settings}
\label{app:benchmarks}
For the main comparisons, we use the official MMK12 test
split as a fixed validation set and report the final
scheduled validation score.
We evaluate MMMU, MathVerse, MathVision, We-Math, and
MathVista after training using LMMs-Eval~\citep{lmmseval}.
All compared checkpoints within each main-comparison
setting use the same benchmark splits and evaluation
procedures.
Table~\ref{tab:eval_tasks} summarizes this protocol.
The chart-centric evaluation is described in
Appendix~\ref{app:chart_eval}.

\begin{table*}[t]
\centering
\small
\caption{Benchmark protocol used for external evaluation.}
\label{tab:eval_tasks}
\resizebox{\textwidth}{!}{%
\begin{tabular}{lllll}
\toprule
Benchmark & Split & Samples & Prompting protocol & Scoring protocol \\
\midrule
MMMU & validation & 900 & multiple-choice prompting & deterministic accuracy \\
MathVerse & testmini & 3,940 & zero-shot direct query & DeepSeek V4 correctness judge \\
MathVision & testmini & 304 & original question & DeepSeek V4 correctness judge \\
We-Math & testmini & 1,740 & reasoning with explicit final answer & deterministic Loose accuracy \\
MathVista & testmini & 1,000 & three zero-shot prompting settings & answer extraction + deterministic scoring; mean over settings \\ 
MMK12-val & test & 2,000 & training validation prompt & validation accuracy \\
\bottomrule
\end{tabular}}
\end{table*}

\paragraph{MMMU~\citep{mmmu}.}
We evaluate the 900-example validation split. 
Following the public Qwen protocol, 
all images are placed before the textual query. 
The prompt contains the available hint, question, 
and answer choices when present, and asks the model to 
select the correct option for multiple-choice items. 
Responses are scored by the benchmark's deterministic accuracy evaluator.

\paragraph{MathVerse~\citep{mathverse}.}
We evaluate the 3,940-example testmini split with 
the benchmark's zero-shot query that does not add 
an explicit chain-of-thought instruction or few-shot demonstrations. 
Benchmark-defined termination strings are preserved. 
Responses are judged with DeepSeek V4 as described in 
Sec.~\ref{app:judging}.

\paragraph{MathVision~\citep{mathvision}.}
We evaluate the 304-example testmini split. 
The textual input is the dataset question itself, 
matching the public Qwen evaluation convention, 
with no additional answer-format instruction. 
Responses are judged by the same DeepSeek V4 
correctness protocol used for MathVerse. 

\paragraph{We-Math~\citep{wemath}.}
We evaluate the 1,740-example testmini split. 
The prompt presents the image, question, and answer options 
and requests an explicit reasoning trace 
followed by a clearly delimited final answer. 
We report accuracy from the benchmark's deterministic reasoning evaluator.

\paragraph{MathVista~\citep{mathvista}.}
We evaluate the 1,000-example testmini split under 
three zero-shot prompting settings. 
The \emph{CoT} setting asks for a reasoned answer 
followed by the required final value or option. 
The \emph{Solution} setting explicitly prompts the model to 
produce a solution before the final answer. 
The \emph{Format} setting requests only an answer in the format 
required by the target type, such as an option letter, integer, 
floating-point value with the prescribed precision, or list. 
Image captions and OCR annotations are not provided. 
We report the arithmetic mean over the three settings as the MathVista score.

\paragraph{MMK12-val~\citep{Mm-eureka}.}
MMK12 is an independently collected multimodal K--12 reasoning benchmark 
covering mathematics, physics, chemistry, and biology. 
We use its official 2,000-example test split as 
a fixed validation set during RL training. 
The same split and evaluation procedure are used for 
all compared methods within each model size. 
MMK12 does not participate in policy optimization, 
checkpoint selection, or early stopping. 
In Table~\ref{tab:main_results}, we report the score from 
the final scheduled validation event rather than 
selecting the best score observed during training.


\subsection{Ablation configurations}
\label{app:ablation-configs}

Each variant modifies the indicated representation, metric, or
allocation rule while retaining the remaining OVC procedure.
Statistics are computed from the current actor and rollouts of the
corresponding run, rather than copied from a separately trained
full-OVC model.
Representation and metric variants re-estimate their reference means,
margin temperatures, refined references, and normalization factor.
Within each update, the selected metric and pre-refinement
temperatures remain fixed during the single reference refinement.
All operations use valid response tokens and exclude padding.

\paragraph{Visual representations.}
Define the scaled affinities
\begin{equation}
    a_{it}^{u}
    =\cos(S_i h_{it},S_i g_i^{u}),
    \qquad u\in\{\mathrm{img},\mathrm{ctx}\}.
    \label{eq:ablation-affinities}
\end{equation}
Full OVC uses $\rho_{it}=a_{it}^{\mathrm{img}}-a_{it}^{\mathrm{ctx}}$.
The representation variants replace the joint feature by
\begin{equation}
    \begin{aligned}
        \text{No visual coordinate:}\quad
        &z_{it}=v_{it},\\
        \text{Absolute image affinity:}\quad
        &z_{it}=[v_{it};a_{it}^{\mathrm{img}}],\\
        \text{Separate affinity coordinates:}\quad
        &z_{it}=[v_{it};a_{it}^{\mathrm{img}};a_{it}^{\mathrm{ctx}}].
    \end{aligned}
    \label{eq:ablation-representations}
\end{equation}
The separate-coordinate variant retains the same response-wise
$S_i$ for both affinities and uses an $(H+2)$-dimensional diagonal
metric.
Its shrinkage target remains the mean variation of the first $H$
predictive coordinates; both appended affinity coordinates are
excluded from that target but receive the same regularization
in Eq.~\eqref{eq:metric}.
``Absolute'' means without context subtraction, not an absolute value.
The $S_i=I$ variant removes scaling only from the two cosine
measurements and retains their difference.
In every representation variant, $v_{it}=(1-p_{it})h_{it}$ is unchanged,
and appended affinity coordinates are not multiplied by $1-p_{it}$.

\paragraph{Metric variants.}
The identity variant sets $D=I$ over the entire selected feature
space while retaining advantage-side reference estimation,
soft refinement, and sign-conditioned allocation.
The no-visual identity variant additionally sets $z_{it}=v_{it}$.
For total-variance calibration, the side-normalized weights in
Eq.~\eqref{eq:moments} are retained, but both sides are centered
at their common midpoint:
\begin{equation}
    \begin{aligned}
        \overline\mu^{(0)}
        &=\frac{\mu_+^{(0)}+\mu_-^{(0)}}{2},\\
        q_k^{\mathrm{tot}}
        &=\sum_{c\in\{-1,+1\}}\frac{1}{M_c}
          \sum_{(i,t)\in\mathcal T_c}
          |A_i|\bigl(z_{itk}-\overline\mu_k^{(0)}\bigr)^2\\
        &=q_k+\frac12
          \bigl(\mu_{+,k}^{(0)}-\mu_{-,k}^{(0)}\bigr)^2.
    \end{aligned}
    \label{eq:ablation-total-variance}
\end{equation}
This is twice the total variance of an equally weighted mixture
of the two advantage-side distributions, matching the summed-side
scale of Eq.~\eqref{eq:moments}.
It changes the centering without changing the relative side weights.
The resulting $q_k^{\mathrm{tot}}$ and
$\overline q^{\mathrm{tot}}=H^{-1}\sum_{k=1}^{H}q_k^{\mathrm{tot}}$
replace $q_k$ and $\overline q$ in Eq.~\eqref{eq:metric}, with the
same $\eta$, $r_{\min}$, and $\epsilon_D$.

\paragraph{Allocation variants.}
At each actor update, these controls first compute the standard
OVC normalized weights $w_{it}$ for that run, then modify their
assignment before evaluating Eq.~\eqref{eq:objective}.
Response-mean allocation replaces every coefficient in response $i$ by
\begin{equation}
    w_{it}^{\mathrm{mean}}
    =\overline w_i
    =\frac{1}{T_i}\sum_{s=1}^{T_i}w_{is}.
    \label{eq:ablation-response-mean}
\end{equation}
Within-response permutation uses
\begin{equation}
    w_{it}^{\mathrm{perm}}=w_{i,\Pi_i(t)},
    \label{eq:ablation-permutation}
\end{equation}
where $\Pi_i$ is sampled uniformly over permutations of valid token
positions, independently for each response at each actor update,
and held fixed throughout that update.

Affinity-ranked allocation reassigns the same normalized weight
multiset according to relative affinity $\rho_{it}$.
Let $t_{i,1},\ldots,t_{i,T_i}$ order the token positions by
nondecreasing $\rho_{it}$, with ties resolved by token position,
and let $w_{i,(1)}\leq\cdots\leq w_{i,(T_i)}$ be the sorted weights.
Then
\begin{equation}
    w_{i,t_{i,k}}^{\mathrm{aff}}=w_{i,(k)},
    \qquad k=1,\ldots,T_i.
    \label{eq:ablation-affinity-ranking}
\end{equation}
The ordering is independent of $c_i$: higher relative affinity
receives a no-smaller coefficient on either advantage side.
Only the assignment is changed; coefficient values remain those
generated by OVC.

All three transformations preserve each response's total weight
and hence the global token mean.
Permutation and affinity ranking additionally preserve the complete
within-response weight multiset.
No second normalization is applied, and the selected coefficients
remain detached in
$A_{it}^{\prime,\mathrm{abl}}
=\operatorname{sg}(w_{it}^{\mathrm{abl}})A_i$.
Zero-advantage responses retain their constant normalized coefficients.
These preservation properties hold within a fixed update, not across
the evolving trajectories of separately trained variants
(Appendix~\ref{app:allocation}).

\subsection{Generation protocol}
\label{app:generation}
The following decoding configuration applies to the main
benchmark evaluations.
Responses are sampled with temperature \(T=0.7\), 
nucleus-sampling threshold \(p=0.8\), and top-\(k\) truncation with \(k=20\). 
The presence penalty is 1.5 and the repetition penalty is 1.0. 
The default evaluation seed is 3407; additional replication seeds 
are reported with the corresponding results. 
The model's native thinking mode is disabled, 
although a benchmark prompt may still explicitly 
request a written reasoning trace.

The context window is 32,768 tokens, 
with at most 16,384 generated tokens per response. 
MathVision uses an image-area range of 602,112--4,014,080 pixels, 
and MMMU uses 1,003,520--4,014,080 pixels. 
The remaining benchmarks use the standard model processor settings. 
Benchmark-defined stop sequences are retained; 
otherwise generation terminates at the model's 
end-of-sequence token or the generation limit.

Benchmark-authored termination strings are retained, 
but the evaluator does not add an implicit double-newline 
stopping rule to tasks that do not define one. 
This is important for long-form mathematical responses, 
where stopping at the first blank line can truncate 
a valid multi-paragraph solution.

\subsection{Scoring protocol}
\label{app:judging}
For the main benchmarks, we use DeepSeek V4~\citep{deepseekv4}
where a language-model judge is required.
The judging instructions and parsing rules are held fixed across all compared checkpoints.

\paragraph{MathVerse and MathVision.}
For each response, DeepSeek V4 receives the textual question, reference 
answer, and model response and returns a binary correctness decision. 
The judge assesses answer equivalence while allowing superficial differences 
in formatting, LaTeX notation, capitalization, and spacing. 
Numerical values, units, option identity, and required answer components 
must agree with the reference answer.

\paragraph{MathVista.}
DeepSeek V4 extracts the final answer from the model response 
using the question as context. The extracted answer is normalized 
and compared with the reference by the deterministic MathVista evaluator 
according to the target answer type, including multiple-choice and 
numerical answers. The reported MathVista score is the arithmetic 
mean over the three prompting settings.

\paragraph{MMMU and We-Math.}
MMMU is scored by its native deterministic accuracy evaluator. 
We-Math is scored using the benchmark's deterministic reasoning scorer.

\subsection{Zero-shot generalization}
\label{app:zero_shot_generalization}
We assess transfer beyond the main evaluation suite in two groups:
chart understanding and visual question answering (VQA), and
scientific and spatial reasoning.
We compare the corresponding instruction-tuned Base models with
GSPO and OVC checkpoints without additional target-benchmark adaptation.
Within each model--benchmark comparison, task-specific prompting,
decoding, and scoring settings are held fixed across methods.

\subsubsection{General-purpose VQA}
\label{app:vqa_generalization}
\begin{table}[!htbp]
    \centering
    \caption{
        \textbf{Per-task zero-shot transfer results.}
        CharXiv is the mean of its reasoning and descriptive scores.
        The corresponding gains over Base are visualized in
        Figure~\ref{fig:vqa}.
    }
    \label{tab:app_vqa_zero_shot_full}
    
    \resizebox{\columnwidth}{!}{
    
        \begin{tabular}{@{}lccccc@{}}
        \toprule
        Method & ChartQA & CharXiv & A-OKVQA & SimpleVQA & Avg. \\
        \midrule
        \multicolumn{6}{l}{\textit{Qwen3-VL-4B}} \\
        Base & 83.84 & 58.04 & 84.89 & 25.93  & 63.18\\
        GSPO~\citep{GSPO} & 84.12 & 59.68 & 85.41 & 25.98 & 63.80 \\
        \rowcolor{blue!10}
        \textbf{GSPO+OVC (Ours)} & 84.16 & 60.12 & 86.03 & 25.43 & 63.94 \\
        \midrule
        \multicolumn{6}{l}{\textit{Qwen3-VL-8B}} \\
        Base & 85.24 & 65.68 & 87.95 & 27.70 & 66.64 \\
        GSPO~\citep{GSPO} & 84.68 & 66.44 & 88.30 & 27.01 & 66.61 \\
        \rowcolor{blue!10}
        \textbf{GSPO+OVC (Ours)} & 85.65 & 66.98 & 88.61 & 28.06 & 67.33 \\
        \midrule
        \multicolumn{6}{l}{\textit{Qwen3.5-9B}} \\
        Base & 85.36 & 75.09 & 87.25 & 27.75 & 68.86 \\
        GSPO~\citep{GSPO} & 85.56 & 74.96 & 87.69 & 27.75 & 68.99 \\
        \rowcolor{blue!10}
        \textbf{GSPO+OVC (Ours)} & 86.43 & 75.54 & 87.42 & 27.80 & 69.30 \\
        \bottomrule
        \end{tabular}
    }
\end{table}

We evaluate ChartQA, CharXiv, A-OKVQA, and SimpleVQA on
Qwen3-VL-4B, Qwen3-VL-8B, and Qwen3.5-9B, using the final
GSPO and OVC checkpoints for each model.
The first two benchmarks focus on interpreting charts, whereas
the latter two assess visual question answering that involves
commonsense or factual knowledge.

\paragraph{ChartQA~\citep{chartqa}.}
ChartQA evaluates question answering over charts, combining the
identification of visual elements and numerical values with
arithmetic and logical operations.
It probes whether a model can use chart content to answer questions
that go beyond recognizing single labels or marks.

\paragraph{CharXiv~\citep{charxiv}.}
CharXiv evaluates understanding of realistic charts collected from
scientific papers.
Its descriptive questions examine basic chart elements, while its
reasoning questions require synthesizing information across visual
elements.
We report the arithmetic mean of the descriptive and reasoning
scores as the CharXiv score.

\paragraph{A-OKVQA~\citep{a-okvqa}.}
A-OKVQA evaluates visual question answering that requires
commonsense and world knowledge in addition to image understanding.
Its questions concern depicted scenes and require linking
visual observations with knowledge beyond what is explicitly
visible in the image.

\paragraph{SimpleVQA~\citep{simplevqa}.}
SimpleVQA evaluates multimodal factuality through short-answer
visual questions with verifiable reference answers.
It covers diverse topics and tests the combination of visual
recognition and factual knowledge needed to answer these questions.

\paragraph{Reporting.}
Table~\ref{tab:app_vqa_zero_shot_full} reports the absolute scores.
The four-task Avg. is the unweighted arithmetic mean of ChartQA,
CharXiv, A-OKVQA, and SimpleVQA, with CharXiv counted once after
averaging its two subtask scores.
Figure~\ref{fig:vqa} shows the corresponding score changes relative
to the Base model at each scale.

\subsubsection{Scientific and spatial reasoning}
\label{app:science_spatial_generalization}
For this group, we compare the Qwen3.5-9B Base model with GSPO and
OVC checkpoints trained on Math77K.
The evaluation covers ScienceQA, RxnBench, SUPERChem,
EmbSpatial, and ERQA, with results reported in
Table~\ref{tab:zero_shot_transfer}.

\begin{table*}[!htbp]
    \centering
    \small
    \caption{
    Chart-centric evaluation protocol.
    The six benchmarks comprise seven evaluated tasks.
    }
    \label{tab:chart_eval_protocol}
    \begin{tabularx}{\textwidth}{@{}l l r X@{}}
    \toprule
    Benchmark / task & Split or set & Questions & Scoring rule \\
    \midrule
    ChartQA-Pro~\citep{chartqapro}
    & Evaluation set
    & 1,948
    & Native scorer, including numerical tolerance and
    textual ANLS where applicable. \\
    
    CharXiv-RQ~\citep{charxiv}
    & Validation
    & 1,000
    & Benchmark-specific reasoning rubric. \\
    
    CharXiv-DQ~\citep{charxiv}
    & Validation
    & 4,000
    & Benchmark-specific descriptive rubric. \\
    
    ChartMuseum~\citep{chartmuseum}
    & Test
    & 1,000
    & Benchmark-specific answer-comparison rubric. \\
    
    ChartX-QA~\citep{chartx}
    & Test
    & 1,152
    & QA scoring rubric, including 5\% numerical tolerance. \\
    
    EvoChart-QA~\citep{evochart}
    & Evaluation set
    & 1,250
    & Exact matching or 5\% numerical tolerance,
    depending on question type. \\
    
    ChartBench~\citep{chartbench}
    & Evaluated subset
    & 2,100
    & Final-answer extraction followed by numerical
    tolerance or Yes/No matching. \\
    \bottomrule
    \end{tabularx}
\end{table*}

\paragraph{ScienceQA~\citep{ScienceQA}.}
ScienceQA contains multiple-choice questions from elementary and
high-school curricula across natural, social, and language science.
Questions may include image or textual context.
It assesses curriculum-level knowledge and reasoning from the
question and its available context.

\paragraph{RxnBench~\citep{RxnBench}.}
RxnBench evaluates chemical reaction understanding using material
from scientific literature.
Its questions probe the interpretation of reaction schemes,
including molecular structures, reaction conditions, reagent roles,
and chemical transformations, connecting domain-specific visual
notation with chemical knowledge and reasoning.

\paragraph{SUPERChem~\citep{SuperChem}.}
SUPERChem is an expert-curated chemistry benchmark emphasizing
multi-step problem solving across chemical subfields.
The benchmark provides text-only and multimodal problem formats,
as well as expert-authored solution paths that support analysis
of reasoning quality in addition to final-answer performance.
It complements RxnBench with a broader test of challenging
chemistry problem solving.

\paragraph{EmbSpatial-Bench~\citep{EmbSpatial}.}
EmbSpatial-Bench, abbreviated as EmbSpatial in the result table,
evaluates spatial understanding from egocentric views of
three-dimensional environments.
Its multiple-choice questions cover above, below, left, right,
close, and far relationships, probing relative object positions
and depth ordering from visual observations.

\paragraph{ERQA~\citep{ERQA}.}
ERQA evaluates embodied reasoning through image-based
multiple-choice questions.
It covers spatial, action, and trajectory reasoning, state
estimation, pointing, multi-view reasoning, and task reasoning.
Together with EmbSpatial-Bench, it tests embodied-relevant
visual reasoning through question answering rather than
closed-loop robot control.

\paragraph{Reporting.}
The Avg. in Table~\ref{tab:zero_shot_transfer} is the unweighted
arithmetic mean of the five benchmark scores, rather than an
accuracy pooled over their individual examples.

\subsection{Chart-centric evaluation}
\label{app:chart_eval}

We evaluate six chart benchmarks using the tasks and
scoring rules summarized in
Table~\ref{tab:chart_eval_protocol}.

\paragraph{Generation.}
For each checkpoint, we generate one answer per question
with temperature 1.0 and top-$p$ 1.0.
The maximum response length is 16,384 tokens within
a 32,768-token context window.
Image areas range from 802,816 to 3,211,264 pixels.
We use benchmark-specific prompts with native thinking
mode disabled.

\paragraph{Scoring.}
We follow each benchmark's scoring rules.
Tasks requiring model-based judging use
deepseek-v4-flash~\citep{deepseekv4}
with the corresponding rubric, temperature 0,
top-$p$ 1.0.

\paragraph{Score aggregation.}
For each evaluation seed, CharXiv is the mean of its
reasoning and descriptive scores, and Avg.\ is the
unweighted mean over the six benchmarks.
Reported scores are averaged over the
evaluation seeds.

\section{Relative Visual Features}
\label{app:features}

\subsection{Response-wise scaling and geometric interpretation}

For coordinate $k$, the response-wise scale in
Eq.~\eqref{eq:rve} is
\begin{equation}
    \begin{aligned}
        \overline h_{ik}
        &=
        \frac1{T_i}\sum_t h_{itk},\\
        \sigma_{ik}
        &=
        \sqrt{
            \max\left\{
                \frac1{T_i}
                \sum_t(h_{itk}-\overline h_{ik})^2,
                \epsilon_\sigma^2
            \right\}
        }.
    \end{aligned}
    \label{eq:response-scale}
\end{equation}
Here $\epsilon_\sigma>0$, and the variance is the population
variance over valid response tokens.
The same
$S_i=\operatorname{diag}(\sigma_i)^{-1}$
transforms $h_{it}$ and both prototypes, without subtracting
$\overline h_i$ in the cosine.
Only $\rho_{it}$ uses this transform.
The predictive feature remains $(1-p_{it})h_{it}$.

Let $W_i=S_i^\top S_i$.
For nonzero transformed vectors and inactive numerical cosine floors,
\begin{equation}
    \cos(S_i a,S_i b)
    =
    \frac{
        a^\top W_i b
    }{
        \sqrt{a^\top W_i a}
        \sqrt{b^\top W_i b}
    }.
    \label{eq:metric-cosine}
\end{equation}
Thus both affinities use the same response-dependent diagonal metric.
Defining
$\widehat a=S_i a/\|S_i a\|_2$
gives
\begin{equation}
    \rho_{it}
    =
    \widehat h_{it}^{\top}
    \bigl(
        \widehat g_i^{\mathrm{img}}
        -
        \widehat g_i^{\mathrm{ctx}}
    \bigr),
    \qquad
    |\rho_{it}|\leq 2.
    \label{eq:rho-bound}
\end{equation}
The bound follows from Cauchy--Schwarz and the triangle inequality.
Identical normalized prototypes yield zero relative affinity.
Standard norm clamping also retains the bound since each clamped
cosine remains in $[-1,1]$.

This contrast does not remove all contextual information or isolate
a causal image effect.
Prompt states can already depend on the image.
Image-position states follow causal visibility and need not encode
question tokens appearing later in the prompt.
The prototypes share a layer basis, not necessarily an identical
information set.
RVE provides one scalar per predictive state and does not localize
image regions.

\subsection{Predictive feature and gradient interpretation}
\label{app:proxy}

Let the actual output-head input be $f_t$, with logits
$a_j=W_j^\top f_t$ and probabilities
$p_j=\operatorname{softmax}(a)_j$.
Holding $f_t$ fixed,
\begin{equation}
    \begin{aligned}
        \log p_y
        &=
        W_y^\top f_t
        -
        \log\sum_j\exp(W_j^\top f_t),\\
        \frac{\partial\log p_y}{\partial W_y}
        &=
        (1-p_y)f_t.
    \end{aligned}
    \label{eq:proxy-identity}
\end{equation}
This output-row identity motivates the uncertainty-scaled feature.
In Eq.~\eqref{eq:joint-feature}, replacing $f_t$ with a state from
the selected intermediate layer retains the factor $1-p_y$ but not
the exact gradient identity.
The final probabilities and intermediate states correspond to
different stages of the actor. Accordingly, $v_{it}$ is a
gradient-inspired predictive feature, not a full-parameter score
vector. The selected intermediate layer is fixed.

\section{Properties of Outcome-conditioned Calibration}
\label{app:calibration}

All results in this section condition on fixed reference means and
a positive diagonal matrix $D$.
They characterize the estimator at these statistics, rather than the
evolution of the complete RL procedure.
The reference estimators require $M_+>0$ and $M_->0$.
Under $0\leq\eta\leq1$, $r_{\min}\geq0$, and $\epsilon_D>0$,
Eq.~\eqref{eq:metric} ensures $d_k\geq\epsilon_D>0$ because
$q_k\geq0$, and hence guarantees $D\succ0$.
A missing advantage side is outside the domain of this two-reference
estimator; a denominator floor alone does not define the missing
empirical reference.

\subsection{Optimal coefficients and normalized contrast}

\begin{proposition}[Regularized contrast]
\label{prop:contrast}
Let $D\succ0$ and $\Delta\mu=\mu_+-\mu_-$.
The unique maximizer of
\begin{equation}
    F_D(b)
    =
    b^\top\Delta\mu
    -
    \frac12 b^\top D b
    \label{eq:contrast-criterion}
\end{equation}
is $b^*=D^{-1}\Delta\mu$.
If $\Delta\mu\neq0$, the nonzero scalar multiples of $b^*$ are
exactly the maximizers of
\begin{equation}
    \begin{aligned}
        \mathcal R_D(b)
        &=
        \frac{(b^\top\Delta\mu)^2}{b^\top D b},
        \qquad
        b\neq0,
        \\
        \max_b\mathcal R_D(b)
        &=
        \Delta\mu^\top D^{-1}\Delta\mu.
    \end{aligned}
    \label{eq:normalized-contrast}
\end{equation}
\end{proposition}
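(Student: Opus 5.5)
The first claim is a strictly concave quadratic maximization, and I will handle it by completing the square. Since $D\succ0$, $D^{-1}$ exists and is positive definite. For any $b$, set $b=b^*+\delta$ with $b^*=D^{-1}\Delta\mu$ and expand $F_D(b)$. Using $Db^*=\Delta\mu$ and the symmetry of $D$, I expect to obtain
\[
F_D(b)=\tfrac12\Delta\mu^\top D^{-1}\Delta\mu-\tfrac12\delta^\top D\delta .
\]
Because $\delta^\top D\delta>0$ whenever $\delta\neq0$, $b^*$ is a maximizer, and it is the unique one. As a consistency check, the gradient $\Delta\mu-Db$ vanishes only at $b^*$, and the Hessian $-D$ is negative definite.

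For the ratio $\mathcal R_D$, I will change variables to $c=D^{1/2}b$, using the positive diagonal square root $D^{1/2}$. Then $b^\top Db=\|c\|_2^2$ and $b^\top\Delta\mu=c^\top D^{-1/2}\Delta\mu$. Cauchy--Schwarz gives
\[
\mathcal R_D(b)=\frac{(c^\top D^{-1/2}\Delta\mu)^2}{\|c\|_2^2}\le\|D^{-1/2}\Delta\mu\|_2^2=\Delta\mu^\top D^{-1}\Delta\mu .
\]
Equality holds exactly when $c$ is parallel to $D^{-1/2}\Delta\mu$, that is, when $b=\alpha D^{-1}\Delta\mu=\alpha b^*$.

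Two points need care in this step. First, $\alpha$ must be nonzero because $b\neq0$ is required. Second, the hypothesis $\Delta\mu\neq0$ is used here: it makes the vector $D^{-1/2}\Delta\mu$ nonzero, so the equality condition really is ``parallel to a nonzero vector.'' Without it, $\mathcal R_D\equiv0$ and every $b\neq0$ would be a maximizer. I will also check directly that $\mathcal R_D(\alpha b^*)=\Delta\mu^\top D^{-1}\Delta\mu$ for every $\alpha\neq0$, which confirms that all nonzero multiples attain the stated maximum.

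The main obstacle is stating the equality case of Cauchy--Schwarz precisely. It must give the reverse inclusion, namely that every maximizer is a nonzero multiple of $b^*$, and not only that such multiples attain the maximum. I will handle this by invoking the standard fact that $(x^\top y)^2=\|x\|^2\|y\|^2$ with $y\neq0$ forces $x\in\operatorname{span}(y)$, and that $D^{1/2}$ is a bijection. Everything else is routine algebra.
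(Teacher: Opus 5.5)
Your proposal is correct and follows the paper's proof. For the first claim, the paper uses the gradient $\Delta\mu-Db$ and the negative definite Hessian $-D$; your completing-the-square identity is an equivalent and slightly more explicit version. For the second claim, the paper uses exactly your substitution $u=D^{1/2}b$, $a=D^{-1/2}\Delta\mu$ with Cauchy--Schwarz.

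Your explicit handling of the equality case is an improvement: it gives the ``exactly'' characterization that the paper only gestures at with ``equality holds when.'' One small correction: the proposition assumes only $D\succ0$, so take the symmetric positive definite square root $D^{1/2}$ rather than a ``diagonal'' one. The two coincide in the paper's setting $D=\operatorname{diag}(d)$.
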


\begin{proof}
Differentiating the quadratic criterion gives
\begin{equation}
    \nabla_b F_D(b)=\Delta\mu-Db,
    \qquad
    \nabla_b^2 F_D(b)=-D\prec0.
    \label{eq:contrast-stationarity}
\end{equation}
Thus $F_D$ is strictly concave, and its unique stationary point
$b^*=D^{-1}\Delta\mu$ is the unique maximizer.
For the normalized criterion, write
$u=D^{1/2}b$ and $a=D^{-1/2}\Delta\mu$.
Cauchy--Schwarz gives
\begin{equation}
    \mathcal R_D(b)
    =
    \frac{(u^\top a)^2}{u^\top u}
    \leq
    a^\top a
    =
    \Delta\mu^\top D^{-1}\Delta\mu.
\end{equation}
Equality holds when $b$ is a nonzero scalar multiple of
$D^{-1}\Delta\mu$.
\end{proof}

When $\Delta\mu=0$, $F_D$ is uniquely maximized at $b=0$,
whereas $\mathcal R_D(b)=0$ for every nonzero $b$.
The denominator in $\mathcal R_D$ is a regularized diagonal
variation surrogate, not generally the full empirical variance of
$b^\top z$, since cross-coordinate covariances are not modeled.
The result requires neither a Gaussian assumption nor a fitted
probability model.

\paragraph{Comparison with an uncalibrated direction.}
For $b_0=\Delta\mu\neq0$,
Proposition~\ref{prop:contrast} yields
\begin{equation}
    \mathcal R_D(b^*)
    =
    \Delta\mu^\top D^{-1}\Delta\mu
    \geq
    \frac{
        \|\Delta\mu\|_2^4
    }{
        \Delta\mu^\top D\Delta\mu
    }
    =
    \mathcal R_D(b_0).
    \label{eq:uncalibrated-comparison}
\end{equation}
Equality holds exactly when $\Delta\mu$ is an eigenvector of $D$.
This comparison characterizes coordinate calibration under the
stated fixed-statistic criterion. It does not establish a corresponding
ordering of rewards, token-label accuracy, or generalization after
reference refinement and nonlinear weight mapping.

\subsection{Equivalence to a distance contrast}

With
$\overline\mu=(\mu_++\mu_-)/2$
and
$\|a\|_{D^{-1}}^2=a^\top D^{-1}a$,
\begin{equation}
    \begin{aligned}
        &
        \|z-\mu_-\|_{D^{-1}}^2
        -
        \|z-\mu_+\|_{D^{-1}}^2
        \\
        &=
        2z^\top D^{-1}(\mu_+-\mu_-)
        +
        \mu_-^\top D^{-1}\mu_-
        -
        \mu_+^\top D^{-1}\mu_+
        \\
        &=
        2\Delta\mu^\top D^{-1}(z-\overline\mu)
        =
        e(z).
    \end{aligned}
    \label{eq:distance-identity}
\end{equation}
This identity follows directly from cancellation of the common
quadratic term and does not require the variational characterization
in Proposition~\ref{prop:contrast}.
A large coordinate magnitude alone therefore does not determine the
distance contrast.
The calibration rescales a nonzero outcome difference by $d_k^{-1}$,
while coordinates with equal reference means contribute zero directly.
With $D=I$, this score becomes an ordinary squared-distance contrast.

\subsection{Direct contribution of the visual coordinate}
\label{app:visual}

For $z=[v;\rho]$ and $D=\operatorname{diag}(D_h,d_\rho)$,
Eq.~\eqref{eq:distance-identity} yields
\begin{equation}
    e(z)
    =
    2\Delta\mu_h^\top D_h^{-1}(v-\overline\mu_h)
    +
    2\frac{\Delta\mu_\rho}{d_\rho}
    (\rho-\overline\mu_\rho),
    \label{eq:visual-proof}
\end{equation}
which proves Eq.~\eqref{eq:visual-decomposition}.
The maximum criterion in
Eq.~\eqref{eq:normalized-contrast}
also decomposes as
\begin{equation}
    \mathcal R_D^{\max}
    =
    \Delta\mu_h^\top D_h^{-1}\Delta\mu_h
    +
    \frac{(\Delta\mu_\rho)^2}{d_\rho}.
    \label{eq:visual-contrast-strength}
\end{equation}
The last term quantifies the visual coordinate's contribution to this
fixed-statistic diagonal criterion.
It is not a measure of information unique to the image, since redundancy
with hidden coordinates is not removed.
Removing RVE and refitting the estimator can change the hidden
references as well.

Holding the final references, $D$, $\tau_c>0$, the predictive
feature $v$, and the advantage side $c\in\{-1,+1\}$ fixed, let
$u_c=\operatorname{sigmoid}(c e^*(z)/\tau_c)$. Then
\begin{equation}
    \frac{\partial e^*}{\partial\rho}
    =
    \frac{2\Delta\mu_\rho^*}{d_\rho},
    \qquad
    \frac{\partial u_c}{\partial\rho}
    =
    u_c(1-u_c)
    \frac{2c\Delta\mu_\rho^*}{\tau_c d_\rho}.
    \label{eq:visual-sensitivity}
\end{equation}
The raw weight has the same sensitivity multiplied by
$\lambda_{\max}-\lambda_{\min}$.
These derivatives show that the response's advantage side and the
empirical visual contrast jointly determine the dependence on visual
affinity.
These are conditional sensitivities of the evidence and raw
coefficient map, not gradients propagated through the credit estimator.
The final normalized coefficients additionally depend on the shared
factor $Z$, and re-estimating the reference statistics introduces
further dependencies.
Here calibration refers to variance scaling of observed advantage-side
contrasts, not correctness-probability calibration or causal attribution.

\section{Soft Reference Refinement}
\label{app:refinement}

The default estimator performs one refinement.
Let $m_{it}^{(0)}=c_i e^{(0)}(z_{it})$, and compute
\begin{equation}
    \begin{aligned}
        \overline m_c
        &=
        \frac1{|\mathcal T_c|}
        \sum_{\mathcal T_c}m_{it}^{(0)},\\
        \tau_c
        &=
        \max\left\{
            \sqrt{
                \frac1{|\mathcal T_c|}
                \sum_{\mathcal T_c}
                (m_{it}^{(0)}-\overline m_c)^2
            },
            \epsilon_\tau
        \right\}.
    \end{aligned}
    \label{eq:temperature}
\end{equation}
Here $\epsilon_\tau>0$. Unlike the reference moments, this
population standard deviation is not weighted by $|A_i|$.
The temperatures are computed from the initial references before
Eq.~\eqref{eq:refinement} updates those references once.
Final weights use the refined references while retaining both the
initial $D$ and the pre-refinement $\tau_c$.

\paragraph{Entropy-regularized assignment.}
The soft assignment has the following standard variational form
\citep{DelTA}.
For fixed $m$ and $\tau>0$, consider
\begin{equation}
    \begin{aligned}
        &\max_{u\in[0,1]}
        \left\{
            u m+\tau\mathcal H(u)
        \right\},\\
        &\mathcal H(u)
        =
        -u\log u-(1-u)\log(1-u).
    \end{aligned}
    \label{eq:entropy-assignment}
\end{equation}
With the convention $0\log0=0$, the objective is continuous on
$[0,1]$. On $(0,1)$, its first derivative is
$m+\tau\log((1-u)/u)$ and its second derivative is
$-\tau/[u(1-u)]<0$.
For finite $m$, the unique stationary point lies in $(0,1)$ and is
the unique maximizer:
\begin{equation}
    u=\operatorname{sigmoid}(m/\tau).
    \label{eq:soft-solution}
\end{equation}

\paragraph{Reference update.}
Fix $r_{it}^c=|A_i|u_{it}^{(0)}$
for $(i,t)\in\mathcal T_c$.
For finite margins, $0<u_{it}^{(0)}<1$, so $M_c>0$ implies
$\sum_{\mathcal T_c}r_{it}^c>0$ in exact arithmetic.
Consider the weighted least-squares problem
\begin{equation}
    \min_\mu
    \sum_{\mathcal T_c}
    r_{it}^c\|z_{it}-\mu\|_{D^{-1}}^2.
    \label{eq:reference-least-squares}
\end{equation}
Its Hessian is
$2(\sum_{\mathcal T_c}r_{it}^c)D^{-1}\succ0$.
Setting its gradient to zero therefore gives the unique minimizer:
\begin{equation}
    2D^{-1}
    \sum_{\mathcal T_c}
    r_{it}^c(\mu-z_{it})
    =
    0,
    \qquad
    \mu
    =
    \frac{
        \sum_{\mathcal T_c}r_{it}^c z_{it}
    }{
        \sum_{\mathcal T_c}r_{it}^c
    },
    \label{eq:reference-solution}
\end{equation}
recovering Eq.~\eqref{eq:refinement}.
These variational characterizations apply separately to the
assignment and reference-update subproblems.
They do not identify the complete procedure as EM or as coordinate
ascent on a common objective. The estimator performs one refinement
rather than iterating to convergence.

\section{Credit Coefficients and Policy Updates}
\label{app:policy}

\subsection{Normalization, bounds, and signs}
\label{app:bounds}

Assume $N>0$ and $0<\lambda_{\min}\leq\lambda_{\max}$.
Every raw coefficient, including that of a zero-advantage token,
lies in $[\lambda_{\min},\lambda_{\max}]$.
Its global token mean $Z$ therefore lies in the same positive interval.
Consequently,
\begin{equation}
    \begin{aligned}
        \frac1N\sum_{i,t}w_{it}&=1,
        \\
        \frac{\lambda_{\min}}{\lambda_{\max}}
        \leq
        w_{it}
        &\leq
        \frac{\lambda_{\max}}{\lambda_{\min}},
        \\
        \operatorname{sign}(w_{it}A_i)
        &=
        \operatorname{sign}(A_i).
    \end{aligned}
    \label{eq:weight-properties}
\end{equation}
The raw interval $[0.8,1.2]$ thus gives the conservative final bounds
$[2/3,3/2]$.
Zero-advantage tokens affect $Z$ but make no direct policy-gradient
contribution.
Global token-mean normalization does not preserve each response's
mean coefficient, the advantage mass of either side, or the
policy-gradient norm.

\subsection{Token-specific gradients under sequence clipping}
\label{app:gradient}

The stop-gradient construction in Eq.~\eqref{eq:ratio} gives
\begin{equation}
    \widetilde s_{it}=s_i
    \quad\text{numerically},
    \qquad
    \nabla_\theta\widetilde s_{it}
    =
    s_i\nabla_\theta L_{it}(\theta).
    \label{eq:ratio-derivative}
\end{equation}
This is the GSPO-token derivative.
For nonzero advantages and away from clipping boundaries, define
\begin{equation}
    \kappa_i
    =
    \begin{cases}
        \mathbb I\{s_i<1+\epsilon_h\},
        & A_i>0,
        \\
        \mathbb I\{s_i>1-\epsilon_l\},
        & A_i<0.
    \end{cases}
    \label{eq:clip-indicator}
\end{equation}
All valid tokens in a response have the same numerical ratio and
advantage sign.
The active clipping branch is therefore shared across that response.
Treating $A_i$ and $w_{it}$ as constants during differentiation,
\begin{equation}
    \boxed{
        \nabla_\theta\mathcal J_{\mathrm{OVC}}
        =
        \frac1B
        \sum_{i\,\mid\,A_i\neq0}
        \frac{\kappa_i s_i A_i}{T_i}
        \sum_t
        w_{it}\nabla_\theta L_{it}(\theta).
    }
    \label{eq:ovc-gradient}
\end{equation}
Sequence clipping therefore selects the active response-level
surrogate terms, while each retained token-gradient term is multiplied
by its own coefficient.
Setting all $w_{it}=1$ recovers the gradient of the corresponding
GSPO surrogate with the same response averaging and clipping rule.

\paragraph{Why a shared differentiable ratio is insufficient.}
If one directly reuses differentiable $s_i$ in every token term,
the unclipped objective becomes
\begin{equation}
    \mathcal J_{\mathrm{shared}}
    =
    \frac1B\sum_i A_i s_i\overline w_i,
    \qquad
    \overline w_i
    =
    \frac1{T_i}\sum_t w_{it}.
\end{equation}
Its gradient is
\begin{equation}
    \nabla_\theta\mathcal J_{\mathrm{shared}}
    =
    \frac1B
    \sum_i
    \frac{s_i A_i\overline w_i}{T_i}
    \sum_t\nabla_\theta L_{it}(\theta).
    \label{eq:shared-ratio-gradient}
\end{equation}
Only the response-average coefficient survives. The same reduction
holds under clipping, with the corresponding factor $\kappa_i$.
The stop-gradient construction instead preserves the correspondence
between individual tokens and their weights, despite an identical
forward ratio value.

\subsection{Separating response weighting from token allocation}
\label{app:allocation}

For fixed rollouts and a fixed actor parameter value $\theta$, let
$g_{it}=\nabla_\theta L_{it}(\theta)$
and
$\overline g_i=T_i^{-1}\sum_t g_{it}$.
Define the vector
\begin{equation}
    C_i
    =
    \frac1{T_i}\sum_t
    (w_{it}-\overline w_i)
    (g_{it}-\overline g_i).
    \label{eq:allocation-covariance}
\end{equation}
Expanding and using the zero sums of centered quantities yields
\begin{equation}
    \begin{aligned}
        \frac1{T_i}\sum_t w_{it}g_{it}
        &=
        \overline w_i\,\overline g_i+C_i,
        \\
        \nabla_\theta\mathcal J_{\mathrm{OVC}}
        &=
        \frac1B
        \sum_{i\,\mid\,A_i\neq0}
        \kappa_i s_i A_i
        (\overline w_i\,\overline g_i+C_i).
    \end{aligned}
    \label{eq:allocation-decomposition}
\end{equation}
The first term is response-level weighting.
The second depends on the correspondence between weights and token
gradients within that response.
Replacing every weight by $\overline w_i$ removes $C_i$ exactly
while preserving the first term.

For a uniformly random permutation $\Pi_i$ of a response's weights,
with token gradients and the weight multiset held fixed,
\begin{equation}
    \mathbb E_{\Pi_i}[C_i^{\Pi_i}]=0,
    \label{eq:permutation-expectation}
\end{equation}
because
$\mathbb E_{\Pi_i}[w_{i,\Pi_i(t)}]=\overline w_i$
at every position.
This conditional expectation concerns a fixed update; an individual
permutation need not eliminate the allocation term.
It motivates permutation and response-mean controls without implying
identical training trajectories or long-run performance.

The preceding identities characterize the detached surrogate in
Eq.~\eqref{eq:objective}.
They do not make the coefficients ground-truth process advantages,
establish an unbiased gradient of the original expected reward,
or guarantee monotonic improvement.
Shared model parameters also prevent advantage-sign preservation
from implying an independent probability increase or decrease for
every token.

\section{Algorithm and Implementation}
\label{app:implementation}

\subsection{Complete algorithm and implementation details}
\label{app:algorithm_details}

\paragraph{Statistical scope and alignment.}
The described recipe uses one image per input, nonempty image and
context position sets, and a fixed intermediate layer.
The state $h_{it}$ is taken at the position predicting $y_{it}$,
rather than at the position after $y_{it}$ has been consumed.
Probabilities come from the current actor rather than being reused
from the behavior policy.
In the default micro-batch of one response, the RVE scale is
response-wise even though $G=8$.

Reference moments, margin scales, and normalization are aggregated
over the actor mini-batch across micro-batches and data-parallel ranks.
Initial sufficient statistics consist of each side's mass, weighted
feature sum, and weighted squared-feature sum.
Margin temperatures use unweighted token counts and margin moments.
RVE scales are computed before these aggregations and are not global
rollout-group statistics.

\begin{algorithm}[t]
\caption{OVC actor update with one reference refinement}
\label{alg:ovc}
\small
\begin{algorithmic}[1]
\setlength{\itemsep}{1pt}

\Require Rollouts, masks, rollout-group rewards, behavior log probabilities,
         and actor $\theta$.
\Require $N>0$ and $M_+,M_->0$.

\State Compute $A_i$ and $c_i$ using Eq.~\eqref{eq:advantage}.
\State Keep $\theta$ fixed until the final optimizer step.
\Statex \textit{All aggregates span valid actor-mini-batch tokens across
        micro-batches and data-parallel ranks.}
\Statex \textit{Passes 1--4 recompute $z_{it}$ using response-wise RVE
        and Eq.~\eqref{eq:joint-feature}, without gradients.}

\Statex \vspace{0.25em}\textbf{Pass 1: Reference moments}
\State Aggregate, for $c\in\{-1,+1\}$ and $k=1,\ldots,H+1$,
\Statex \hspace{\algorithmicindent}
        $(M_c,S_{c,k},Q_{c,k})
        \gets \sum_{(i,t)\in\mathcal T_c}|A_i|
        (1,z_{itk},z_{itk}^{2})$.
\State $\mu_{c,k}^{(0)}\gets S_{c,k}/M_c$,\quad
       $q_k\gets\sum_{c\in\{-1,+1\}}
       \bigl[Q_{c,k}/M_c-(\mu_{c,k}^{(0)})^2\bigr]$.
\State Construct $D$ using Eq.~\eqref{eq:metric}.

\Statex \vspace{0.25em}\textbf{Pass 2: Margin scales}
\State Compute $c_i e^{(0)}(z_{it})$ using $\mu_\pm^{(0)}$ and $D$.
\State Estimate $\tau_c$ from unweighted margin moments
       using Eq.~\eqref{eq:temperature}.

\Statex \vspace{0.25em}\textbf{Pass 3: Reference refinement}
\State Compute $u_{it}^{(0)}$ and aggregate $\mu_c^*$
       using Eq.~\eqref{eq:refinement}.
\State Form $e^*$ with $\mu_\pm^*$; retain $D$ and
       the pre-refinement $\tau_c$.

\Statex \vspace{0.25em}\textbf{Pass 4: Weight normalization}
\State Compute $\widehat w_{it}$ using Eq.~\eqref{eq:weights};
       set $\widehat w_{it}=\lambda_{\min}$ when $A_i=0$.
\State $Z\gets N^{-1}\sum_{i,t}\widehat w_{it}$,
       including zero-advantage tokens.

\Statex \vspace{0.25em}\textbf{Pass 5: Policy optimization}
\State Recompute differentiable $L_{it}(\theta)$ and detached credit features.
\State Recompute $w_{it}=\widehat w_{it}/Z$
       with $\mu_\pm^*,D,\tau_c,Z$ fixed.
\State Set $A'_{it}=\operatorname{sg}(w_{it})A_i$
       and construct GSPO-token ratios using Eq.~\eqref{eq:ratio}.
\State Accumulate gradients of $-\mathcal J_{\mathrm{OVC}}$
       using Eq.~\eqref{eq:objective}; update $\theta$ once.

\end{algorithmic}
\end{algorithm}

\paragraph{Nonzero and degenerate batches.}
The mathematical reference formulas assume positive side masses.
For complete rollout groups, any group with nonconstant rewards
supplies both advantage signs.
If all groups have constant rewards, every $A_i$ and the policy
gradient are zero.
Numerical denominators and variances use floors in the implementation.
Group reward standard deviations use the sample estimator with
denominator $G-1$, whereas the feature and margin variances use
population estimators.
The all-image formulation does not describe mixed image-free data.
\begin{table*}[t]
\centering
\small
\caption{Training recipes and evaluated checkpoints.}
\label{tab:app_training_recipes}
\resizebox{\textwidth}{!}{
\begin{tabular}{lccccccc}
\toprule
Model & Training data & Prompt batch & Rollouts $G$ & Learning rate & Max prompt & Max response & Checkpoint \\
\midrule
Qwen2.5-VL-7B
& ViRL39K
& 512
& 16
& $1\times10^{-6}$
& 2,048
& 16384
& step 76 \\

Qwen3-VL-4B
& ViRL39K
& 512
& 16
& $1\times10^{-6}$
& 2,048
& 20,480
& step 76 \\

Qwen3-VL-8B
& ViRL39K
& 512
& 16
& $1\times10^{-6}$
& 2,048
& 8,192
& step 76 \\

Qwen3.5-9B
& Math77K
& 256
& 8
& $1\times10^{-6}$
& 4,096
& 16,384
& step 300 \\

Qwen3-VL-4B (Chart)
& ChartVerse (18k)
& 256
& 16
& $1\times10^{-6}$
& 16,384
& 16,384
& step 72 \\

\bottomrule
\end{tabular}}
\end{table*}

\paragraph{Normalization domains.}
The reference estimator uses $|A_i|$ for each token rather than
$|A_i|/T_i$.
Longer responses therefore have greater total weight in the reference
statistics.
In contrast, Eq.~\eqref{eq:objective} averages tokens within a response
and then averages responses.
Global coefficient normalization does not remove this difference.

\paragraph{Computation.}
For one refinement, the current implementation uses four no-gradient
statistics forwards and one training forward per micro-batch.
The actor parameters do not change between these passes.
Sufficient-statistic aggregation agrees algebraically with direct
global aggregation when the recomputed features are identical,
up to floating-point order.
The calibration arithmetic is $O(NH)$ per pass with $O(H)$ accumulated
statistics, excluding model forwards, activation storage, and
communication.
These counts are not measured runtime or speedup claims.
Standard inference does not require the credit estimator.

OVC uses the scalar sequence reward supplied by each
training recipe.
For the accuracy-plus-format recipe, this reward is
$R_i=0.9R_i^{\mathrm{acc}}+0.1R_i^{\mathrm{fmt}}$.
The chart-centric recipe instead uses binary correctness
rewards from CompassVerifier~\citep{compassverifier}, without a format-reward
component (Appendix~\ref{app:chart_training}).
Across these recipes, advantage signs describe relative
sequence rewards within a rollout group, rather than
token-level correctness.
The stated GSPO objective has no dual-clipping term.
Neither adaptive layer selection, region supervision, a full
covariance estimator, nor an extra process reward model is part
of this method.

\subsection{Training recipes and checkpoints}
\label{app:training_recipes}

Table~\ref{tab:app_training_recipes} summarizes the RL
training recipes and evaluated checkpoints for the main
comparisons, optimizer extensions, and chart-centric experiment.
Within each setting, compared methods share the training
data and rollout configuration unless otherwise noted.
The chart-centric runs are initialized from the shared
Chart SFT checkpoint at optimizer step 782.
Their final checkpoint is taken after 72 rollout steps.
Complete chart-specific settings are given in
Appendix~\ref{app:chart_training}.

\begin{table}[!htbp]
    \centering
    \caption{
        Defaults of the supplied Qwen3.5-9B recipe.
    }
    \label{tab:app_ovc_defaults}
    \small
    \begin{tabular}{ll}
        \toprule
        Quantity & Setting \\
        \midrule
        Predictive layer and hidden width
            & $\ell=16$, $H=4096$ \\
        Responses per input
            & $G=8$ \\
        Global actor mini-batch
            & $512$ responses \\
        Forward micro-batch per GPU
            & $1$ response \\
        Reference refinements
            & $K=1$ \\
        Raw coefficient interval
            & $[0.8,1.2]$ \\
        Shrinkage and relative variance floor
            & $\eta=0.1$, $r_{\min}=10^{-3}$ \\
        Response scale floor
            & $\epsilon_\sigma=10^{-6}$ \\
        Margin temperature floor
            & $\epsilon_\tau=10^{-8}$ \\
        Lower and upper ratio clipping
            & $3\times10^{-4}$, $4\times10^{-4}$ \\
        Auxiliary KL and entropy losses
            & Disabled \\
        \bottomrule
    \end{tabular}
\end{table}

Additional OVC-specific implementation constants for the Qwen3.5-9B recipe are listed in Table~\ref{tab:app_ovc_defaults}. These settings concern the credit estimator itself and are separate from the model-level training recipe summarized above.

\subsubsection{Chart-centric training}
\label{app:chart_training}

\paragraph{Supervised initialization.}
We fine-tune Qwen3-VL-4B-Instruct on 100,000 examples
from ChartVerse-SFT-600K~\citep{chartverse}, using
image--question inputs and complete chain-of-thought
solutions as targets.
We reserve 512 examples for validation, with their
images held out from both SFT and RL training.
SFT runs for one epoch with a global batch size of 128,
a learning rate of $10^{-5}$, cosine decay with 10\%
warmup, and a maximum sequence length of 22,000 tokens.
The final checkpoint at optimizer step 782 serves as
Base and initializes both RL methods.

\paragraph{RL data.}
For each question in the full 40,000-question
ChartVerse-RL-40K pool, the SFT model samples eight
responses at temperature 1.
CompassVerifier-7B~\citep{compassverifier} judges their
correctness, and we retain questions with one to seven
correct responses.
Filtering and deduplication yield 18,280 training
questions shared by GSPO and GSPO+OVC.

\end{document}